\newcommand{\ourmethod}{{\fontfamily{lmtt}\selectfont \textbf{AuroRA}}\xspace}
\definecolor{ccr}{RGB}{10,110,150}
\definecolor{new_ccr}{RGB}{0,191,255}
\definecolor{darksalmon}{rgb}{0.91, 0.59, 0.48}
\newcommand{\blue}[1]{$_{\color{BlueGreen}\downarrow #1}$}
\newcommand{\red}[1]{$_{\color{RedOrange}\uparrow #1}$}
\newcommand{\darkred}[1]{$_{\color{red}\uparrow #1}$}
\newcommand{\darkblue}[1]{$_{\color{Blue}\downarrow #1}$}
\theoremstyle{plain}
\newtheorem{theorem}{Theorem}[section]
\newtheorem{proposition}[theorem]{Proposition}
\newtheorem{lemma}[theorem]{Lemma}
\theoremstyle{definition}
\newtheorem{definition}[theorem]{Definition}
\newtheorem{assumption}[theorem]{Assumption}
\theoremstyle{remark}
\title{\ourmethod: Breaking Low-Rank Bottleneck of LoRA with Nonlinear Mapping}
\author{%
  Haonan Dong$^{1}$, Wenhao Zhu$^{1}$, Guojie Song$^{\dag 1}$, Liang Wang$^{2}$ \\
  $^{1}$State Key Laboratory of General Artificial Intelligence, \\ 
  School of Intelligence Science and Technology, Peking University, \\ $^{2}$Alibaba Group, $^\dag$ Corresponding author \\
  \small {\faEnvelope} \texttt{hndong25@stu.pku.edu.cn, gjsong@pku.edu.cn}
}
\begin{document}

\maketitle

\begin{abstract}
Low-Rank Adaptation (LoRA) is a widely adopted parameter-efficient fine-tuning (PEFT) method validated across NLP and CV domains. However, LoRA faces an inherent low-rank bottleneck: narrowing its performance gap with full fine-tuning requires increasing the rank of its parameter matrix, resulting in significant parameter overhead. Recent linear LoRA variants have attempted to enhance expressiveness by introducing additional linear mappings; however, their composition remains inherently linear and fails to fundamentally improve LoRA’s representational capacity. To address this limitation, we propose \ourmethod, which incorporates an Adaptive Nonlinear Layer (ANL) between two linear projectors to capture \emph{fixed} and \emph{learnable} nonlinearities. This combination forms an {\fontfamily{lmtt}\selectfont \textbf{MLP-like structure}} with a compressed rank, enabling flexible and precise approximation of diverse target functions while theoretically guaranteeing lower approximation errors and bounded gradients. Extensive experiments on 22 datasets and 6 pretrained models demonstrate that \ourmethod: (\textbf{I}) not only matches or surpasses full fine-tuning performance with only $6.18\%\sim25\%$ of LoRA’s parameters but also (\textbf{II}) outperforms competitive PEFT methods by up to $10.88\%$ in both NLP and CV tasks, and \textbf{(III)} exhibits robust performance across various rank configurations.
\end{abstract}

\vspace{-0.5em}
\section{Introduction}
\label{sec:intro}

\begin{wrapfigure}{r}{0.55\textwidth}
\vspace{-1.5em}
 \centering
 \includegraphics[width=\linewidth]{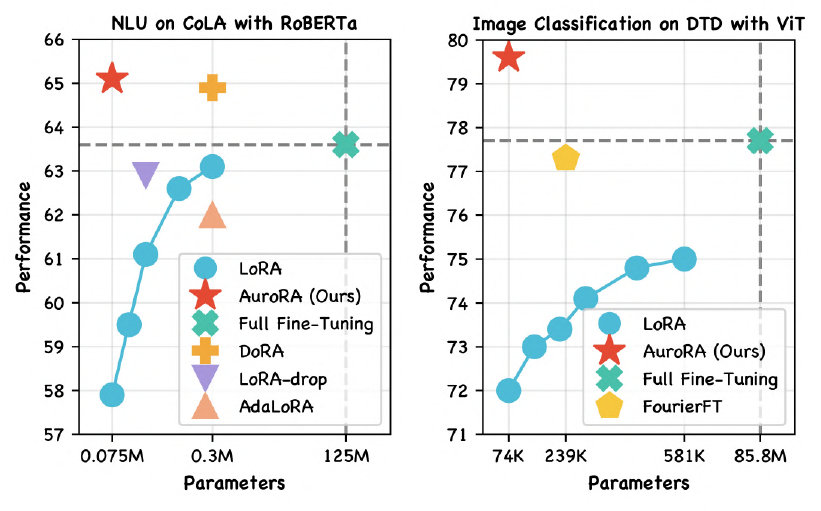}
  \vspace{-1.6em}
  \caption{The trade-off between parameters and performance of various fine-tuning methods on NLP (left) and CV (right) tasks. (\textit{\textbf{Left}}) In NLU, RoBERTa-Base is fine-tuned on \textsc{cola}, with LoRA ranks \( r = \{2, 3, 4, 6, 8\} \). (\textit{\textbf{Right}}) In image classification, ViT-Base is fine-tuned on \textsc{dtd}, with LoRA ranks \( r = \{2, 4, 6, 8, 12, 16\} \). \vspace{-1.4em}}
  \label{fig:intro-1}
\end{wrapfigure}
\vspace{-0.7em}
In recent years, pretrained models have demonstrated excellent generalization performance across numerous tasks in various domains \cite{roberta, deberta, llama3, debertav3, internlm2, tot}. In practical applications, to further unleash their powerful capabilities on specific downstream tasks, these models often require relevant fine-tuning \cite{alpaca,ft1,gpt3,ft2,meta-r1,gpv}. However, the increasing size of their parameters poses a significant challenge to fine-tuning all parameters \cite{adapter1,ft3}. To address this issue, the field of Parameter-Efficient Fine-Tuning (PEFT) has made substantial progress \cite{adapter1,gpt,lora,prefix,peft1,llm_adapters,bitfit}. The core idea is to fine-tune only a small subset of the model's parameters while freezing the majority of the pretrained parameters, achieving performance comparable to full fine-tuning \cite{peft_survey}.

\begin{wrapfigure}{r}{0.55\textwidth}
\vspace{-1.3em}
  \begin{center}
    \includegraphics[width=\linewidth]{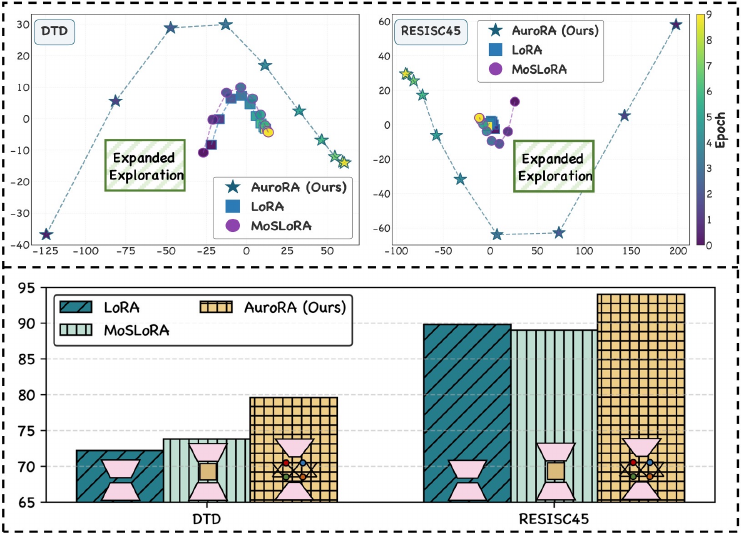}
  \end{center}
  \vspace{-1em}
  \caption{We evaluate LoRA, MoSLoRA, and our \ourmethod on \textsc{dtd} and \textsc{resisc45} datasets, employing ViT-Base with a rank of \(r=2\). (\textit{\textbf{Upper}}) We record the 
 \(\Delta W\) at the \(\{0,1,2,...,9\}\)-th epochs, and perform PCA visualization on these \(\Delta W\). We observe that \ourmethod is capable of exploring a broader parameter space. (\textit{\textbf{Lower}}) We present the accuracy results on both datasets.}
  \label{fig:intro-2}
  \vspace{-1.6em}
\end{wrapfigure}

LoRA is a commonly used state-of-the-art PEFT method \cite{lora}. Specifically, it assumes that the weight updates conform to a low-rank hypothesis and represents these updates using two low-rank matrices, i.e., \( W_0 + \Delta W = W_0 + BA \). Its performance has been validated in fields such as natural language processing (NLP) \cite{lora,lora_nlp2} and computer vision (CV) \cite{lora_cv1,lora_cv2}. Despite its significant success, LoRA still faces an inherent limitation, namely \textit{the low-rank bottleneck}, as illustrated in Figure \ref{fig:intro-1}. 
As the rank of LoRA increases, the model's performance improves, thereby narrowing the gap with full fine-tuning \cite{lora_rank}; however, the parameter cost grows proportionally with the rank, which weakens its parameter efficiency.
This dilemma leads us to the first research question: \ding{182} \textbf{\textit{Can we achieve a further balance between parameters and performance?}}

Recently, several linear LoRA variants have emerged \cite{adalora,salora,moslora,flora}. They introduce an \textit{additional matrix} between the \( B \) and \( A \) matrices of LoRA to weaken the correlation constraints between them, thereby enhancing LoRA's learning and expressive capabilities. 
Specifically, one approach involves the introduction of a \textit{diagonal matrix} to facilitate singular value decomposition \cite{adalora,salora}, while another approach incorporates an \textit{arbitrary matrix} to fuse subspaces \cite{moslora,flora}.
Nevertheless, LoRA's inherent linearity persists even when an additional matrix is introduced, preserving its fundamental structure as a linear mapping. As illustrated in Figure \ref{fig:intro-2}, when the rank is extremely low, MoSLoRA \cite{moslora} (a linear variant that incorporates an arbitrary matrix) has only a \textit{marginal} effect on expanding the exploration of \(\Delta W\), leading to a failure to further boost performance ($2.2\%\uparrow$ on \textsc{dtd} and $0.56\%\downarrow$ on \textsc{resisc45}). 
The structural characteristics and resultant performance limitations of linear variants naturally prompt our second research question: \ding{183} \textbf{\textit{Can we achieve more than marginal performance improvements by introducing a nonlinear transformation between LoRA's two linear layers?}}

Motivated by the above two research questions, this paper focuses on introducing nonlinear mappings into LoRA and further compressing the rank to achieve a better balance between parameters and performance. To this end, we propose a method called {\fontfamily{lmtt}\selectfont \textbf{\underline{A}ctivate Yo\underline{ur} L\underline{o}w-\underline{R}ank \underline{A}daptation}} (\ourmethod). We revisit LoRA through the lens of linear mappings and identify two critical limitations: (\textbf{I}) \textit{insufficient expressiveness} and (\textbf{II}) \textit{limited training flexibility}. To fully harness LoRA’s potential, \ourmethod introduces an \textbf{A}daptive \textbf{N}onlinear \textbf{L}ayer (\textbf{ANL}) between the low-rank matrices, forming an \textbf{MLP-like structure}. ANL employs a hybrid design of \textit{fixed} and \textit{learnable} nonlinearities to enhance model expressivity within a more compressed rank while enabling flexible training strategies to expand the explorable parameter space (Figures \ref{fig:intro-1} and \ref{fig:intro-2}). Theoretical analysis demonstrates that \ourmethod not only achieves a strictly lower approximation error than LoRA but also preserves bounded gradient norms. Experiments across NLP and CV tasks confirm the \underline{\textit{efficiency}}, \underline{\textit{generalizability}}, and \underline{\textit{robustness}} of \ourmethod. We further conduct ablation studies to dissect the contributions of fixed and learnable components, and evaluate its robustness against linear LoRA variants across multiple rank configurations. Our contributions can be summarized as follows:
\begin{itemize}[leftmargin=*]
\vspace{-0.7em}
\item[\ding{182}] \emph{\textbf{Perspective Shift.}} We systematically revisit two research lines of LoRA: \textit{the low-rank bottleneck} and \textit{linear LoRA variants}. By interpreting LoRA through the lens of linear mappings, we address both research questions within a unified framework, providing theoretical analyses.
\vspace{-0.6em}
\item[\ding{183}] \emph{\textbf{Nonlinear Proposal.}} We propose \ourmethod, which introduces nonlinear mappings into LoRA and further compresses the rank, resulting in a superior balance between parameters and performance, paving the way for further unlocking the significant potential of LoRA.
\vspace{-0.6em}
\item[\ding{184}] \emph{\textbf{Experimental Validation.}} Extensive experiments on 22 datasets and 6 pretrained models showcase that \ourmethod: \textbf{(I)} not only matches or surpasses full fine-tuning performance with only $6.18\%\sim25\%$ of LoRA’s parameters but also \textbf{(II)} outperforms competitive PEFT methods by up to $10.88\%$ in NLP and CV tasks, and \textbf{(III)} exhibits robust performance across various rank configurations. 
\end{itemize}

\vspace{-0.5em}
\section{Methodology}
\vspace{-0.5em}
As illustrated in Figure \ref{fig:overview}, we introduce \ourmethod, an extension of LoRA that incorporates nonlinear mappings to overcome the inherent low-rank bottleneck. We reinterpret LoRA as a \textbf{two-layer linear mapping}, whereas our proposed \ourmethod transforms it into an \textbf{MLP-like structure} by introducing an adaptive nonlinear layer.

\vspace{-0.5em}
\subsection{LoRA: A Two-Layer Linear Mapping}
\vspace{-0.5em}
In standard LoRA \cite{lora}, the weight update \( \Delta \mathcal{W} \) for a pre-trained weight matrix \( \mathcal{W}_0 \) is approximated as the product of two low-rank matrices:
\begin{equation}
    \Delta \mathcal{W} = \mathbf{B} \mathbf{A},
    \end{equation}
where \( \mathbf{A} \in \mathbb{R}^{r \times d_{\text{in}}} \) and \( \mathbf{B} \in \mathbb{R}^{d_{\text{out}} \times r} \), with the rank \( r \) satisfying \( r \ll \min(d_{\text{in}}, d_{\text{out}}) \). The forward propagation for an input vector \( \mathbf{x} \in \mathbb{R}^{d_{\text{in}}} \) is thus expressed as:
\begin{equation}
    \mathbf{h} = \mathcal{W}_0 \mathbf{x} + \Delta \mathcal{W} \mathbf{x} = \mathcal{W}_0 \mathbf{x} + \mathbf{B} \mathbf{A} \mathbf{x}.
\end{equation}
The above process can be interpreted as a two-layer linear mapping, where $\mathbf{A}$ serves as a downward projector $\mathcal{P}_\text{down}$ that maps the input $\mathbf{x}$ from a high-dimensional space \( \mathbb{R}^{d_{\text{in}}} \) to a lower-dimensional hidden space \( \mathbb{R}^r \), and $\mathbf{B}$ serves as an upward projector $\mathcal{P}_\text{up}$ that maps back to \(\mathbb{R}^{d_{\text{out}}} \). However, we note that LoRA is constrained by its sequential linear mapping structure, leading to two significant shortcomings: \ding{182} \textbf{insufficient expressiveness}: being a purely linear structure, it requires increasing the hidden dimension to handle more complex incremental weights and improve performance; \ding{183} \textbf{limited training flexibility}: the direct low-rank decomposition induces strong interdependencies between the linear layers, imposing rigid structural constraints that reduce training flexibility \cite{linear_lora_survey}.

\begin{figure*}[!t]
  \centering
  \includegraphics[width=\linewidth]{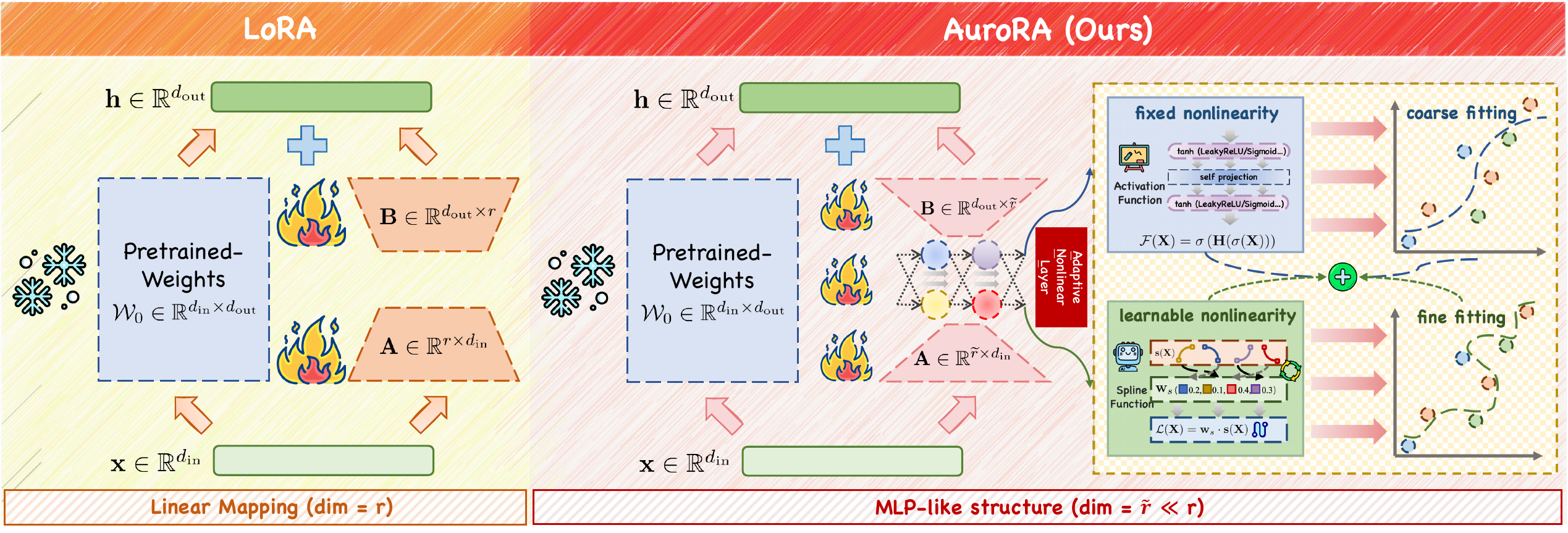}
  \vspace{-1em}
  \caption{A general comparison of LoRA and our \ourmethod. (\textit{\textbf{Left}}) In LoRA, matrices $\mathbf{A}$ and $\mathbf{B}$ act as two linear projectors, forming a two-layer linear mapping with hidden dimension $r$.  
(\textit{\textbf{Right}}) Our \ourmethod extends LoRA by incorporating an adaptive nonlinear layer comprising fixed and learnable nonlinearities, forming an MLP-like structure with significantly reduced hidden dimension $\widetilde{r}$ ($\widetilde{r} \ll r$).}
   \label{fig:overview}
   \vspace{-1em}
\end{figure*}

\vspace{-0.5em}
\subsection{\ourmethod: An MLP-like Structure}
\vspace{-0.5em}
To address these limitations, \ourmethod introduces an Adaptive Nonlinear Layer (ANL) between \( \mathbf{A} \) and \( \mathbf{B} \), modifying the weight update as follows:
\begin{equation}\label{eq:delta}
    \Delta \mathcal{W} = \mathbf{B} \cdot \sigma(\mathbf{A}),
\end{equation}
where $\sigma$ is the element-wise ANL that maps from \(\mathbb{R}^{\widetilde{r}}\) to \(\mathbb{R}^{\widetilde{r}}\). Here, \(\widetilde{r}\) denotes the compressed hidden dimension (\(\widetilde{r} \ll r\)), i.e., the low dimension to which the input is projected by $\mathcal{P}_\text{down}$. Formally, the forward propagation equation in the training phase is given by:
\begin{equation} \label{eq:train-forward}
    \mathbf{h} = \mathcal{W}_0 \mathbf{x} + \mathbf{B} \cdot \sigma(\mathbf{A} \mathbf{x}).
\end{equation}
The introduction of ANL enables \ourmethod to form an MLP (Multilayer Perceptron)-like structure.

\vspace{-0.5em}
\subsection{Adaptive Nonlinear Layer}
\vspace{-0.5em}

Consider an arbitrary input vector \( \mathbf{z} \). After projecting \( \mathbf{z} \) into an \( \widetilde{r} \)-dimensional hidden layer, our objective is to introduce sufficient nonlinearity to capture as many complex relationships as possible within this limited hidden space. To achieve this, we propose the following components: \ding{182} \textbf{\textit{fixed nonlinearity (\( \mathcal{F} \))}}, which utilizes parameter-free nonlinear activation functions to activate neurons in the hidden space, thereby achieving \textit{coarse fitting}; and \ding{183} \textbf{\textit{learnable nonlinearity (\( \mathcal{L} \))}}, which employs parameterized nonlinear functions during the training process of the weight update increments, facilitating \textit{fine fitting}. By combining \ding{182} and \ding{183}, the Adaptive Nonlinear Layer (ANL) can be formally expressed as:
\begin{equation}
    \sigma(\mathbf{Z}) = \mathcal{F}(\mathbf{Z}) + \mathcal{L}(\mathbf{Z}),
\end{equation}
where \( \mathcal{F} \) represents the fixed nonlinear activation, \( \mathcal{L} \) denotes the learnable nonlinear function, and \( \mathbf{Z} \) denotes the input to ANL. We provide a detailed comparison of fixed and learnable nonlinearity in Section \ref{sec:study-ablation}.

For \ding{182}, we adopt widely used activation functions in deep learning, such as ReLU \cite{relu}, sigmoid, and tanh. A detailed comparison of different activation functions and their impact on \ourmethod's performance is provided in Section \ref{sec:study-activation function}. Through our comparative evaluations, \(\tanh\) emerges as the top-performing activation function, and theoretical analysis concurrently ensures its training stability. This preference is consistent with empirical findings in prior studies \cite{adapter,dyt} that demonstrate the robust performance of the \(\tanh\) activation function for large-scale models, leading us to employ \(\tanh\) in our implementation. The depth of the network influences the number of activation functions that can be introduced. Specifically, we introduce a \textit{self-projection} $\mathcal{P}_\text{self} \in \mathbb{R}^{\widetilde{r} \times \widetilde{r}}$ between $\mathcal{P}_\text{down}$ and $\mathcal{P}_\text{up}$, which extends the depth of the standard LoRA structure. Subsequently, we introduce $\tanh$ activation functions between $\mathcal{P}_\text{down}$ and $\mathcal{P}_\text{self}$, and between $\mathcal{P}_\text{self}$ and $\mathcal{P}_\text{up}$. Formally, the fixed nonlinear component is defined as:
\begin{equation}
    \mathcal{F}(\mathbf{Z}) = \tanh\left(\mathbf{H}\left( \tanh(\mathbf{Z})\right)\right),
\end{equation}
where \( \mathbf{H} \in \mathbb{R}^{\widetilde{r} \times \widetilde{r}} \) denotes $\mathcal{P}_\text{self}$.

To achieve \ding{183}, we propose using spline functions to model complex relationships \cite{spline1}. 
Numerous prior studies \cite{kan,spline2,spline3,spline4} have demonstrated that splines are flexible, piecewise polynomial functions capable of approximating a wide range of nonlinear behaviors. 
Specifically, we employ B-spline basis functions to construct the learnable component. Formally, the learnable nonlinear component is defined as:
\begin{equation}
    \mathcal{L}(\mathbf{Z}) = \mathbf{w}_s \cdot \mathbf{s}(\mathbf{Z}),
\end{equation}
where \( \mathbf{w}_s \in \mathbb{R}^{\widetilde{r}} \) is the spline weight vector, and \( \mathbf{s}(\mathbf{Z}) = \sum_{i=1}^{\widetilde{r}} B(z_i) \) represents the spline basis functions applied to each dimension \( z_i \) of \( \mathbf{Z} \). The learnable parameters in this component are the spline weights $\mathbf{w}_s$, which determine the contribution of each basis function $B(z_i)$ to the overall output of \( \mathcal{L}(\mathbf{Z}) \). During training, these weights are iteratively updated to minimize the task-specific loss function.

By introducing \ding{182} and \ding{183} in the hidden layer with dimension $\widetilde{r}$, ANL effectively captures complex relationships without significantly increasing the number of additional parameters. The combination of \textit{coarse fitting} and \textit{fine fitting} enhances the standard LoRA structure, improving its expressive capacity and training flexibility, achieving what we refer to as {\fontfamily{lmtt}\selectfont \textbf{Activate Your Low-Rank Adaptation}}. The complete Adaptive Nonlinear Layer (ANL) developed in our work can then be formally represented as:
\begin{equation}
    \sigma(\mathbf{Z}) = \tanh\left(\mathbf{H}\left( \tanh(\mathbf{Z})\right)\right) + \mathbf{w}_s \cdot \mathbf{s}(\mathbf{Z}).
\end{equation}
Further details and the complete algorithmic workflow of \ourmethod are provided in Appendix \ref{app-complete}.

\vspace{-0.5em}
\subsection{Theoretical Analysis}
\vspace{-0.5em}
In this subsection, we propose two theoretical propositions concerning \ourmethod and analyze its parameter and computational cost. Additionally, we present an intuitive case in Appendix \ref{app-case} to help better understand the role of nonlinearities.

\begin{proposition}
[Lower Approximation Error]
\label{prop:LAE}
Let $M \in \mathbb{R}^{d_{\mathrm{out}}\times d_{\mathrm{in}}}$ with $\mathrm{rank}(M) > r$. 
Define
\[
  \varepsilon_r(M) 
  ~=~
  \inf_{
    U \in \mathbb{R}^{d_{\mathrm{out}}\times r},\,
    V \in \mathbb{R}^{r\times d_{\mathrm{in}}}
  }
  \|\,M - U\,V\|.
\]
Then $\varepsilon_r(M) > 0$, and for our proposed update of the form 
\[
  M_{\mathrm{nonlinear}}(\mathbf{x})
  ~=~
  B\,\sigma\bigl(A\,\mathbf{x}\bigr),
  \quad
  A \in \mathbb{R}^{r\times d_{\mathrm{in}}},
  \;
  B \in \mathbb{R}^{d_{\mathrm{out}}\times r},
\]
where $\sigma$ is our adaptive nonlinear layer, there exists a parameter set $(A^*, B^*, \sigma^*)$ such that
\[
  \bigl\|\,M - M_{\mathrm{nonlinear}}\bigr\| 
  ~\le~
  c\,\varepsilon_r(M),
  \quad
  0 < c < 1.
\]
Hence, the approximation error is strictly below the linear rank-$r$ limit $\varepsilon_r(M)$, using the same rank $r$.
\end{proposition}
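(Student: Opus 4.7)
The plan is to split the statement into two parts—strict positivity of the linear baseline $\varepsilon_r(M)$ and strict improvement by the nonlinear model—and argue each in turn.

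For $\varepsilon_r(M) > 0$, I would invoke the Eckart--Young--Mirsky theorem: since $\mathrm{rank}(M) > r$, no rank-$r$ matrix $UV$ can equal $M$, and the infimum of $\|M - UV\|$ over rank-$r$ products equals the $(r+1)$-th singular value $\sigma_{r+1}(M) > 0$. Writing the SVD as $M = U\Sigma V^{\!\top}$, let $M_r = U_r\Sigma_r V_r^{\!\top}$ be the truncation that achieves this infimum; this will serve as the anchor of the nonlinear construction.

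For the upper bound, I would proceed constructively. Initialize $A^* = V_r^{\!\top}$ and $B^* = U_r\Sigma_r$. Because the ANL $\sigma = \mathcal{F} + \mathcal{L}$ contains a learnable B-spline block $\mathcal{L}$ together with a tunable self-projection $\mathbf{H}$ inside $\mathcal{F}$, $\sigma^*$ can be tuned to reproduce the identity on a bounded region of interest—scale the $\tanh$ block into its linear regime and use the spline weights to cancel any residual curvature by a standard interpolation argument. This already gives $B^*\sigma^*(A^*\mathbf{x}) = M_r\mathbf{x}$ and hence $\|M - M_{\mathrm{nonlinear}}\| \le \varepsilon_r(M)$, showing that the optimal linear solution sits inside the nonlinear family. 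To sharpen this to a strict inequality, I would perturb $(A^*, B^*, \sigma^*)$ jointly and compute the first-order change in the error functional. The residual $R = M - M_r$ is nonzero by the rank gap, and because the spline directions span a dense subspace of continuous functions on a bounded input domain, the gradient of $\|M - B\sigma(A\cdot)\|$ with respect to the spline weights—combined with a small tilt of $B$ away from the top-$r$ left singular subspace—cannot vanish identically, for otherwise $R$ would be orthogonal to every admissible perturbation, contradicting the rank gap. A descent step then yields $\|M - M_{\mathrm{nonlinear}}\| \le (1-\eta)\varepsilon_r(M)$ for some $\eta > 0$, giving $c = 1 - \eta < 1$.

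The main obstacle is the quantitative lower bound on $\eta$: for any fixed $B$, the output constraint $B\sigma(A\mathbf{x}) \in \mathrm{col}(B)$ imposes $\|(I - P_B)M\|$ as a hard floor, which coincides with $\sigma_{r+1}(M)$ at the Eckart--Young--Mirsky optimum. The crux is therefore the joint tilt of $B$ and the nonlinear correction in $\sigma$: one must show that trading a second-order increase in the projection-orthogonal error for a first-order decrease in the in-$\mathrm{col}(B)$ residual delivers a net first-order reduction. Establishing this tilt rigorously—via B-spline density together with a careful expansion of the joint error functional around the Eckart--Young--Mirsky optimum—carries most of the technical weight of the proof.
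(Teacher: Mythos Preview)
Your route differs substantially from the paper's. The paper never perturbs around the Eckart--Young--Mirsky point or computes a first variation. Instead it invokes a B-spline density lemma (piecewise-polynomial approximation on a compact domain $\Omega$) applied to the residual $R=M-M^*$: viewing $R\mathbf{x}$ as a function of $\mathbf{z}=A^*\mathbf{x}$, it chooses spline weights so that $\mathcal{F}(\mathbf{z})+\mathbf{w}_s^*\cdot\mathbf{s}(\mathbf{z})$ approximates this residual to within $\gamma\|R\|$ for some $\gamma<1$, and then sets $\widehat{M}(\mathbf{x})=M^*(\mathbf{x})+B^*\sigma^*(A^*\mathbf{x})$, identifying $\widehat{M}$ with $M_{\mathrm{nonlinear}}$. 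The strict improvement is thus a one-shot consequence of spline density, not a descent step.

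The perturbation program you outline has a genuine gap at the ``first-order decrease'' step. At your anchor $(A^*,B^*,\sigma^*)=(V_r^{\!\top},\,U_r\Sigma_r,\,\mathrm{id})$ the pointwise residual $R\mathbf{x}$ lies in $\mathrm{col}(U_r)^{\perp}$, whereas any spline perturbation of $\sigma$ moves $B^*\sigma(A^*\mathbf{x})$ only inside $\mathrm{col}(B^*)=\mathrm{col}(U_r)$; by orthogonality the first-order change in $\|M\mathbf{x}-B\sigma(A\mathbf{x})\|^2$ with respect to the spline weights is identically zero. A tilt $\delta B$ contributes $\delta B\cdot A^*\mathbf{x}$ at first order, which is a \emph{linear} rank-$r$ direction and hence already stationary by EYM optimality (indeed $\langle R,\,\delta B\,V_r^{\!\top}\rangle_F=0$ since $V_r^{\!\top}R^{\!\top}=0$). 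Thus the full gradient at your anchor vanishes, and there is no first-order in-$\mathrm{col}(B)$ gain to trade against the second-order orthogonal loss; your claimed contradiction (``$R$ would be orthogonal to every admissible perturbation'') is in fact exactly what occurs. The paper avoids this obstruction by keeping the linear part $M^*$ as a separate additive term in its construction rather than forcing it through $B\sigma(A\cdot)$, which is what allows its spline correction to act on $R$ directly.
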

\vspace{-0.5em}
$\blacktriangleright$ Proposition \ref{prop:LAE} indicates that, thanks to the introduction of nonlinear mappings, \ourmethod achieves a strictly lower approximation error compared to LoRA at the same rank, meaning that the resulting weight updates are closer to the optimal solution. Furthermore, our empirical results demonstrate that this improvement persists even when further compressing the hidden dimensions of \ourmethod. A rigorous proof, along with technical details and error bounds, is provided in Appendix \ref{app-proof-LAE}.
\begin{proposition}[Gradient Boundedness]\label{prop:gradient-boundedness}
In the \ourmethod, the use of the $\tanh$ activation function and B-spline basis functions results in bounded gradients with respect to both the inputs and the model parameters.
\end{proposition}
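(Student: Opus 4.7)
The plan is to establish boundedness componentwise by exploiting the elementary bounds $|\tanh(x)| < 1$ and $0 < \tanh'(x) \le 1$, together with the compact-support property of B-spline basis functions and their derivatives. Once each constituent of $\sigma$ is shown to have bounded partial derivatives with respect to both its input $\mathbf{Z}$ and the parameters it depends on, the chain rule propagates these bounds through the full \ourmethod forward map $\mathbf{h} = \mathcal{W}_0\mathbf{x} + \mathbf{B}\,\sigma(\mathbf{A}\mathbf{x})$, under the mild assumption (standard in PEFT analysis) that the input $\mathbf{x}$ and the trainable factors have bounded norms.

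First I would analyse the fixed branch $\mathcal{F}(\mathbf{Z}) = \tanh(\mathbf{H}\tanh(\mathbf{Z}))$. Its Jacobian factorises as $D_2\,\mathbf{H}\,D_1$, where $D_1,D_2$ are diagonal matrices whose entries are values of $\tanh'$, hence in $(0,1]$; it follows that $\|\partial\mathcal{F}/\partial\mathbf{Z}\| \le \|\mathbf{H}\|$ and $\|\partial\mathcal{F}/\partial\mathbf{H}\| \le 1$ entry-wise, because the input to the outer $\tanh$ is bounded by $\|\mathbf{H}\|$ while the signal $\tanh(\mathbf{Z})$ feeding the inner $\tanh$ lies in $(-1,1)^{\widetilde{r}}$. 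Second, for the learnable branch $\mathcal{L}(\mathbf{Z}) = \mathbf{w}_s\cdot\mathbf{s}(\mathbf{Z})$, I would invoke two standard B-spline facts: on any compact knot interval each basis function satisfies $\|B\|_\infty \le C_B$, and its derivative, writable as a scaled difference of $(k{-}1)$-order B-splines on the same grid, satisfies $\|B'\|_\infty \le C_{B'}$. This immediately yields $\|\partial\mathcal{L}/\partial\mathbf{Z}\| \le \|\mathbf{w}_s\|_\infty C_{B'}$ and $\|\partial\mathcal{L}/\partial\mathbf{w}_s\| \le C_B$. Summing the two branches gives componentwise bounds on the Jacobian of $\sigma$.

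Third, I would propagate these local bounds through $\mathbf{B}\,\sigma(\mathbf{A}\mathbf{x})$ by the chain rule. For inputs with $\|\mathbf{x}\| \le R_x$, the resulting sensitivity with respect to the input is
\[
\bigl\|\nabla_{\mathbf{x}}\mathbf{h}\bigr\| \;\le\; \|\mathcal{W}_0\| + \|\mathbf{B}\|\bigl(\|\mathbf{H}\| + \|\mathbf{w}_s\|_\infty C_{B'}\bigr)\|\mathbf{A}\|,
\]
and entirely analogous closed-form bounds arise for $\nabla_{\mathbf{A}}\mathbf{h}$, $\nabla_{\mathbf{B}}\mathbf{h}$, $\nabla_{\mathbf{H}}\mathbf{h}$, and $\nabla_{\mathbf{w}_s}\mathbf{h}$, each expressible as a product of $R_x$, the operator norms of the trainable factors, and the universal constants $1$, $C_B$, $C_{B'}$. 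All of these remain finite provided the parameters themselves stay bounded, which is ensured by standard weight-decay or clipped-descent conditions.

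The main obstacle is handling the B-spline term cleanly: unlike $\tanh$, a B-spline is only piecewise polynomial, so its derivative is globally bounded only if one fixes a knot grid of uniformly bounded spacing and restricts attention to inputs that fall within its support. I would make this assumption explicit (it is already implicit in the construction, since the knot vector is specified in advance and does not grow during training) and, if necessary, compose the spline with its preceding $\tanh$-saturated signal so that its argument lies in a compact interval by construction. A secondary subtlety is ensuring $\|\mathbf{H}\|$ and $\|\mathbf{w}_s\|$ stay finite along the optimisation trajectory; this is handled either as a standing assumption or as a consequence of the bounded-update regime of the optimiser, after which the gradient bounds follow immediately from the displayed inequality above.
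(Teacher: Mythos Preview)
Your proposal is correct and takes essentially the same route as the paper: bound the Jacobian of each branch of $\sigma$ using $|\tanh|<1$, $\tanh'\in(0,1]$, and the compact-support boundedness of B-spline bases and their derivatives, then propagate through the chain rule. The only cosmetic difference is that the paper phrases everything through the gradient of a squared loss $L=\tfrac{1}{2}\|f_{\text{\ourmethod}}(x)-y\|^2$ and treats $\partial L/\partial W_b$, $\partial L/\partial w_s$, $\partial L/\partial x$ in three short lemmas, whereas you bound $\nabla\mathbf{h}$ directly and are more explicit about the standing assumptions on input and parameter norms that the paper leaves implicit.
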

\vspace{-0.5em}
$\blacktriangleright$ Proposition \ref{prop:gradient-boundedness} posits that, despite the introduction of fixed and learnable nonlinearities, \ourmethod maintains bounded gradients during training, thereby ensuring training stability. The corresponding proof is provided in Appendix \ref{proof: prop-3}.
\vspace{-0.7em}
\paragraph{Parameter Cost} In Section \ref{sec:intro}, we discussed the relationship between trainable parameters and rank in LoRA, where the number of introduced trainable parameters is $O(r(d_\text{in} + d_\text{out}))$. Here, $d_\text{in}$ and $d_\text{out}$ represent the input and output dimensions, respectively, i.e., $\textsc{params} \propto r$. In \ourmethod, we aim to further compress the parameter count by setting the hidden layer dimension to \(\widetilde{r}=r/k\), where \(k\) is a constant and \(r\) is the optimal rank setting of LoRA. This means that the number of trainable parameters in \ourmethod is \(1/k\) of that in LoRA. In this work, we set \(\widetilde{r}\) to 2, corresponding to values of \(k\) such as 4 and 8. The additional parameters introduced in ANL are of the order \(O(2\widetilde{r}^2)\), which, compared to the significant reduction in parameter count, can be considered negligible.
\vspace{-0.7em}
\paragraph{Computational Cost} The computational complexity of \ourmethod's forward pass in the training phase, $\Delta \mathbf{h} = \mathbf{B} \sigma\cdot(\mathbf{A}\mathbf{x})$, is analyzed as follows. Let $b$ denote the batch size, $d_{in}$ and $d_{out}$ the input/output feature dimensions, $r$ the rank, and $G$ the collective B-spline parameters (a small constant, $G=O(r)$). The linear projections by $\mathbf{A} \in \mathbb{R}^{r \times d_{in}}$ and $\mathbf{B} \in \mathbb{R}^{d_{out} \times r}$ incur complexities of $O(b d_{in} r)$ and $O(b r d_{out})$, respectively. The intermediate fixed and learnable non-linearities, $\sigma(\cdot)$, each contribute an additional $O(b r^2)$ term (with the learnable component's $O(b r G)$ complexity simplifying due to $G=O(r)$). Consequently, the total complexity for \ourmethod is $O(b(d_{in}r + 2r^2 + r d_{out}))$. Given the standard low-rank setting where $r \ll \min(d_{in}, d_{out})$, the quadratic overhead $O(b r^2)$ introduced by the non-linearities is negligible compared to the dominant linear terms, thus maintaining a computational footprint comparable to that of LoRA.

\vspace{-0.5em}
\section{Experiments}
\vspace{-0.5em}
In this section, we conduct extensive experiments to answer the following research questions: ($\boldsymbol{\mathcal{RQ}1}$) Can \ourmethod effectively achieve efficiency in NLP tasks? ($\boldsymbol{\mathcal{RQ}2}$) Can \ourmethod effectively achieve efficiency in CV tasks? ($\boldsymbol{\mathcal{RQ}3}$) What are the respective roles of fixed and learnable nonlinearity? ($\boldsymbol{\mathcal{RQ}4}$) How do different activation functions in fixed nonlinearity affect performance? ($\boldsymbol{\mathcal{RQ}5}$) How does \ourmethod's sensitivity to rank compare to that of linear LoRA variants? \footnote{The source code is available at \href{https://github.com/ins1stenc3/AuroRA}{here}.}

\vspace{-0.5em}
\subsection{Experimental Setup}
\vspace{-0.5em}
\subsubsection{Datasets and Pre-Trained Models}
\vspace{-0.5em}
\paragraph{Datasets} For our experiments, we evaluate the ability of \ourmethod to achieve parameter-efficient fine-tuning using four categories of datasets spanning both NLP and CV domains: $\blacksquare$ \textbf{Natural Language Understanding}: We employ GLUE (General Language Understanding Evaluation) \cite{glue}, a widely used multi-task benchmark in NLU, which includes datasets such as SST-2, MRPC, CoLA, QNLI, RTE, and STS-B. The evaluation metrics are as follows: CoLA is assessed using Matthew's correlation coefficient, STS-B with Pearson's correlation coefficient, and accuracy is used for the other tasks. $\blacksquare$ \textbf{Commonsense Reasoning}: We use a collection of commonly used datasets, including BoolQ \cite{boolq}, PIQA \cite{piqa}, SocialIQA \cite{siqa}, HellaSwag \cite{hellas}, WinoGrande \cite{winog}, ARC-e, ARC-c \cite{arc}, and OpenBookQA \cite{obqa}. For fair comparison, we follow the setup proposed by \cite{moslora}, fine-tuning the pretrained models on the Commonsense170K dataset, which serves as a mixture of the aforementioned benchmark datasets. We then evaluate using accuracy as the performance metric. $\blacksquare$ \textbf{Image Classification}: We use five datasets with small label spaces—OxfordPets \cite{pets}, CIFAR-10 \cite{cifar}, DTD \cite{dtd}, EuroSAT \cite{eurosat}, and RESISC45 \cite{resisc}, and three datasets with large label spaces, namely StanfordCars \cite{cars}, FGVC \cite{fgvc}, and CIFAR-100 \cite{cifar}. $\blacksquare$ \textbf{Subject-Driven Generation}: Following \cite{dreambooth}, we use the DreamBooth dataset. More detailed descriptions of the datasets can be found in Appendix \ref{appendix-dataset-glue}, \ref{appendix-dataset-cr}, \ref{appendix-dataset-icl}.
\vspace{-0.5em}
\paragraph{Pre-Trained Models} We focus on a selection of representative pretrained models, including RoBERTa (Base \& Large) \cite{roberta}, LLAMA3-8B \cite{llama3}, ViT (Base \& Large) \cite{vit} and SDXL \cite{sdxl-paper}.

\begin{table*}[!t]
\centering
\caption{We report the performance of different fine-tuning methods on six datasets of the GLUE benchmark, using RoBERTa-Base and RoBERTa-Large models. For CoLA, we report the Matthew's Correlation Coefficient (MCC); for STS-B, we report the Pearson Correlation Coefficient (PCC); and for all other tasks, we report accuracy (Acc.). The reported results are the medians of five runs, each using a different random seed. * indicates numbers published in prior works. The best results are highlighted in \textbf{bold}, and the runners-up are \underline{underlined}. For all six datasets, higher values are considered better for all metrics.}
\vspace{0.5em}
\label{tab:result-nlu}
\renewcommand\tabcolsep{5.3pt}
\renewcommand\arraystretch{1.1}

\resizebox{\linewidth}{!}{
\begin{tabular}{c|l|ccccccc|c}
\Xhline{1.2pt}
\rowcolor{CadetBlue!20} 
\textbf{Model} & \textbf{Method} &  \textbf{SST-2} & \textbf{MRPC} & \textbf{CoLA} & \textbf{QNLI} & \textbf{RTE} & \textbf{STS-B} & \textbf{Avg.} & \textbf{Params.}\\
\Xhline{1.2pt}

\multirow{10}{*}{\rotatebox{90} {RoBERTa-Base}} 
& Full Fine-Tuning* & 94.8 & 90.2 & 63.6 & 92.8 & 78.7 & 91.2 & 85.2 & 125M\\
\cline{2-10} 
\noalign{\vskip 0.1mm}

& \cellcolor{gray!10}BitFit* & \cellcolor{gray!10}93.7\blue{1.1} & \cellcolor{gray!10}\textbf{92.7}\red{2.5} & \cellcolor{gray!10}62.0\blue{1.6} & \cellcolor{gray!10}91.8\blue{1.0} & \cellcolor{gray!10}\underline{81.5}\red{2.8} & \cellcolor{gray!10}90.8\blue{0.4} & \cellcolor{gray!10}85.4\red{0.2} & \cellcolor{gray!10}0.1M\\

& $\text{Adapter}^{\text{D}}$* & 94.7\blue{0.1} & 88.4\blue{1.8} & 62.6\blue{1.0} & 93.0\red{0.2} & 75.9\blue{2.8} & 90.3\blue{0.9} & 84.2\blue{1.0} & 0.9M \\

& \cellcolor{gray!10}LoRA* & \cellcolor{gray!10}\underline{95.1}\red{0.3} & \cellcolor{gray!10}89.7\blue{0.5} & \cellcolor{gray!10}63.4\blue{0.2} & \cellcolor{gray!10}\underline{93.3}\red{0.5} & \cellcolor{gray!10}78.4\blue{0.3} & \cellcolor{gray!10}\textbf{91.5}\red{0.3} & \cellcolor{gray!10}85.2\red{0.0} & \cellcolor{gray!10}0.3M\\

& AdaLoRA* & 94.5\blue{0.3} & 88.7\blue{1.5} & 62.0\blue{1.6} & 93.1\red{0.3} & 81.0\red{2.3} & 90.5\blue{0.7} & 85.0\blue{0.2} & 0.3M\\

& \cellcolor{gray!10}DyLoRA* & \cellcolor{gray!10}94.3\blue{0.5} & \cellcolor{gray!10}89.5\blue{0.7} & \cellcolor{gray!10}61.1\blue{2.5} & \cellcolor{gray!10}92.2\blue{0.6} & \cellcolor{gray!10}78.7\red{0.0} & \cellcolor{gray!10}91.1\blue{0.1} & \cellcolor{gray!10}84.5\blue{1.3} & \cellcolor{gray!10}0.3M\\

& FourierFT* & 94.2\blue{0.6} & 90.0\blue{0.2} & 63.8\red{0.2} & 92.2\blue{0.6} & 79.1\red{0.4} & 90.8\blue{0.4} & 85.0\blue{0.2} & 0.024M\\

& \cellcolor{gray!10}LoRA-drop* & \cellcolor{gray!10}94.5\blue{0.3} & \cellcolor{gray!10}89.5\blue{0.7} & \cellcolor{gray!10}62.9\blue{0.7} & \cellcolor{gray!10}93.1\red{0.3} & \cellcolor{gray!10}81.4\red{2.7} & \cellcolor{gray!10}91.0\blue{0.2} & \cellcolor{gray!10}85.4\red{0.2} & \cellcolor{gray!10}0.15M \\

& DoRA* & 95.0\red{0.2} & 89.7\red{0.5} & \underline{64.9}\red{1.3} & 92.9\red{0.1} & 79.2\red{0.5} & \underline{91.3}\red{0.1} & \underline{85.5}\red{0.3} & 0.3M\\

& \cellcolor{gray!10}\ourmethod  & \cellcolor{gray!10}\textbf{95.2}\darkred{\textbf{0.4}} & \cellcolor{gray!10}\underline{91.9}\darkred{\textbf{1.7}} & \cellcolor{gray!10}\textbf{65.1}\darkred{\textbf{1.5}} & \cellcolor{gray!10}\textbf{93.4}\darkred{\textbf{0.6}} & \cellcolor{gray!10}\textbf{85.2}\darkred{\textbf{6.5}} & \cellcolor{gray!10}\textbf{91.5}\darkred{\textbf{0.3}} & \cellcolor{gray!10}\textbf{87.1}\darkred{\textbf{1.7}} & \cellcolor{gray!10}0.075M\\

\hline

\multirow{6}{*}{\rotatebox{90} {RoBERTa-Large}} 
& Full Fine-Tuning*  & \underline{96.4} & \underline{90.9} & 68.0 & 94.7 & 86.6 & \underline{92.4} & \underline{88.2} & 356M\\
\cline{2-10} 
\noalign{\vskip 0.1mm}

& \cellcolor{gray!10}$\text{Adapter}^{\text{P}}$*  & \cellcolor{gray!10}96.1\blue{0.3} & \cellcolor{gray!10}90.2\blue{0.7} & \cellcolor{gray!10}\underline{68.3}\red{0.3} & \cellcolor{gray!10}\underline{94.8}\red{0.1} & \cellcolor{gray!10}83.8\blue{2.8} & \cellcolor{gray!10}92.1\blue{0.3} & \cellcolor{gray!10}87.6\blue{0.6} & \cellcolor{gray!10}3M\\

& $\text{Adapter}^{\text{H}}$*  & 96.2\blue{0.2} & 88.7\blue{2.2} & 66.5\blue{1.5} & 94.7\red{0.0} & 83.4\blue{3.2} & 91.0\blue{1.2} & 86.8\blue{1.4} & 6M\\

& \cellcolor{gray!10}LoRA*  & \cellcolor{gray!10}96.2\blue{0.2} & \cellcolor{gray!10}90.2\blue{0.7} & \cellcolor{gray!10}68.2\red{0.2} & \cellcolor{gray!10}\underline{94.8}\red{0.1} & \cellcolor{gray!10}85.2\blue{1.4} & \cellcolor{gray!10}92.3\blue{0.1} & \cellcolor{gray!10}87.8\blue{0.4} & \cellcolor{gray!10}0.8M\\

& FourierFT*  & 96.0\blue{0.4} & \underline{90.9}\red{0.0} & 67.1\blue{0.9} & 94.4\blue{0.3} & \underline{87.4}\red{0.8} & 91.9\blue{0.5} & 88.0\blue{0.2} & 0.048M\\

& \cellcolor{gray!10}\ourmethod  & \cellcolor{gray!10}\textbf{96.6}\darkred{\textbf{0.2}} & \cellcolor{gray!10}\textbf{91.2}\darkred{\textbf{0.3}} & \cellcolor{gray!10}\textbf{69.2}\darkred{\textbf{1.2}} & \cellcolor{gray!10}\textbf{95.0}\darkred{\textbf{0.3}} & \cellcolor{gray!10}\textbf{89.9}\darkred{\textbf{3.3}} & \cellcolor{gray!10}\textbf{92.5}\darkred{\textbf{0.1}} & \cellcolor{gray!10}\textbf{89.1}\darkred{\textbf{0.9}} & \cellcolor{gray!10}0.2M\\

\Xhline{1.2pt}
\end{tabular}
}
\vspace{-1.0em}
\end{table*}

\begin{table*}[!t]
\centering
\caption{Commonsense reasoning evaluation results for LLaMA3-8B on eight tasks. * indicates numbers taken from \cite{milora}. The best results are highlighted in \textbf{bold}, and the runners-up are \underline{underlined}. For all eight tasks, higher values are considered better.}
\vspace{0.5em}
\label{tab:result-cr}
\renewcommand\tabcolsep{5.3pt}
\renewcommand\arraystretch{1.1}

\resizebox{\linewidth}{!}{
\begin{tabular}{l|r|ccccccccc}
\Xhline{1.2pt}
\rowcolor{CadetBlue!20} 
\textbf{Method} & \textbf{Params.} &  \textbf{BoolQ} & \textbf{PIQA} & \textbf{SIQA} & \textbf{HellaSwag} & \textbf{WinoGrande} & \textbf{ARC-e} & \textbf{ARC-c} & \textbf{OBQA} & \textbf{Avg.} \\
\Xhline{1.2pt}

LoRA* & 56.6M & \underline{70.8} & 85.2 & \textbf{79.9} & 91.7 & \underline{84.3} & 84.2 & 71.2 & 79.0 & 80.8 \\

\cellcolor{gray!10}PiSSA* & \cellcolor{gray!10}83.8M & \cellcolor{gray!10}67.1\blue{3.7} & \cellcolor{gray!10}81.1\blue{4.1} & \cellcolor{gray!10}77.2\blue{2.7} & \cellcolor{gray!10}83.6\blue{8.1} & \cellcolor{gray!10}78.9\blue{5.4} & \cellcolor{gray!10}77.7\blue{6.5} & \cellcolor{gray!10}63.2\blue{8.0} & \cellcolor{gray!10}74.6\blue{5.4} & \cellcolor{gray!10}75.4\blue{5.4} \\

MiLoRA* & 56.6M & 68.8\blue{2.0} & \underline{86.7}\red{1.5} & 77.2\blue{2.7} & \underline{92.9}\red{1.2} & \textbf{85.6}\red{1.3} & \underline{86.8}\red{2.6} & \underline{75.5}\red{4.3} & \underline{81.8}\red{2.8} & \underline{81.9}\red{1.1} \\

\cellcolor{gray!10}\ourmethod & \cellcolor{gray!10}3.5M & \cellcolor{gray!10}\textbf{72.5}\darkred{\textbf{1.7}} & \cellcolor{gray!10}\textbf{87.4}\darkred{\textbf{2.2}} & \cellcolor{gray!10}\underline{79.0}\darkblue{\textbf{0.9}} & \cellcolor{gray!10}\textbf{94.2}\darkred{\textbf{2.5}} & \cellcolor{gray!10}83.0\darkblue{\textbf{1.3}} & \cellcolor{gray!10}\textbf{89.3}\darkred{\textbf{5.1}} & \cellcolor{gray!10}\textbf{78.8}\darkred{\textbf{7.6}} & \cellcolor{gray!10}\textbf{84.8}\darkred{\textbf{5.8}} & \cellcolor{gray!10}\textbf{83.6}\darkred{\textbf{2.8}} \\
\Xhline{1.2pt}
\end{tabular}
}
\vspace{-1.5em}
\end{table*}

\vspace{-0.5em}
\subsubsection{Baselines}
\vspace{-0.5em}
In the baseline evaluation, we adopt a range of representative and competitive fine-tuning methods, categorized into three groups: Full Fine-Tuning, PEFT methods, and LoRA variants. The PEFT methods we use include BitFit \cite{bitfit}, Adapter$^{\text{H}}$ \cite{adapter}, Adapter$^{\text{D}}$ \cite{adapterdrop}, Adapter$^{\text{P}}$ \cite{adapaterfusion}, and LoRA \cite{lora}. For the LoRA variants, we consider AdaLoRA \cite{adalora}, DyLoRA \cite{dylora}, FourierFT \cite{fourierft}, LoRA-drop \cite{loradrop}, DoRA \cite{dora_baseline}, MoSLoRA \cite{moslora}, PiSSA \cite{pissa} and MiLoRA \cite{milora}.

\vspace{-0.5em}
\subsection{\ourmethod Achieves Efficiency in NLP Tasks (\texorpdfstring{$\boldsymbol{\mathcal{RQ}1}$}{})}
\vspace{-0.5em}
To answer $\mathcal{RQ}$1, we design two tasks: Natural Language Understanding (NLU) and Commonsense Reasoning. In the NLU task, we select RoBERTa-Base and RoBERTa-Large \cite{roberta} as pretrained models and compare \ourmethod with \textbf{ten} other widely-used fine-tuning methods across all six datasets of the GLUE benchmark \cite{glue}. The results of this extensive comparison are shown in Table \ref{tab:result-nlu}, with additional hyperparameter configuration details provided in Appendix \ref{appendix-hyper-nlu}. Following \cite{lora}, we fine-tune only the query and value weights of each transformer block, while fully fine-tuning the classification head. For the commonsense reasoning task, we select LLaMA3-8B \cite{llama3} as the base model and compare \ourmethod with LoRA and two other LoRA variants (PiSSA \cite{pissa} and MiLoRA \cite{milora}). The results are shown in Table \ref{tab:result-cr}. The relevant hyperparameters are listed in Appendix \ref{appendix-hyper-cr}. Our observations can be summarized as follows:

\begin{table*}[!t]
\centering
\caption{Fine-tuning results with ViT Base and Large models on different image classification datasets. We report the accuracy (\%) after 10 epochs. Avg. represents the average accuracy across all datasets for each method. * indicates numbers taken from \cite{fourierft}. The best results are highlighted in \textbf{bold}, and the runners-up are \underline{underlined} (excluding full fine-tuning).}
\vspace{0.5em}
\label{tab:result-icl}
\renewcommand\tabcolsep{5.3pt}
\renewcommand\arraystretch{1.1}

\resizebox{\linewidth}{!}{
\begin{tabular}{c|l|c|ccccccccc}
\Xhline{1.2pt}
\rowcolor{CadetBlue!20} 
\textbf{Model} & \textbf{Method} & \textbf{Params.} &  \textbf{OxfordPets} & \textbf{StanfordCars} & \textbf{CIFAR10} & \textbf{DTD} & \textbf{EuroSAT} & \textbf{FGVC} & \textbf{RESISC45} & \textbf{CIFAR100} & \textbf{Avg.} \\
\Xhline{1.2pt}

\multirow{5}{*}{\rotatebox{90} {ViT-Base}} 

& Full Fine-Tuning* & 85.8M & 93.1 & 79.8 & 98.9 & 77.7 & 99.1 & 54.8 & 96.1 & 92.4 & 86.5 \\
\cline{2-12} 
\noalign{\vskip 0.1mm}

& \cellcolor{gray!10}Linear Probing* & \cellcolor{gray!10}- & \cellcolor{gray!10}90.3\blue{2.8} & \cellcolor{gray!10}25.8\blue{54.0} & \cellcolor{gray!10}96.4\blue{2.5} & \cellcolor{gray!10}69.8\blue{7.9} & \cellcolor{gray!10}88.7\blue{10.4} & \cellcolor{gray!10}17.4\blue{37.4} & \cellcolor{gray!10}74.2\blue{21.9} & \cellcolor{gray!10}84.3\blue{8.1} & \cellcolor{gray!10}68.4\blue{18.1} \\

& LoRA* & 581K & \underline{93.2}\red{0.1} & 45.4\blue{34.4} & \textbf{98.8}\blue{0.1} & 75.0\blue{2.7} & \underline{98.4}\blue{0.7} & 25.2\blue{29.6} & 92.7\blue{3.4} & \textbf{92.0}\blue{0.4} & 77.6\blue{8.9} \\

& \cellcolor{gray!10}FourierFT* & \cellcolor{gray!10}239K & \cellcolor{gray!10}93.1\red{0.0} & \cellcolor{gray!10}\underline{56.4}\blue{23.4} & \cellcolor{gray!10}\underline{98.7}\blue{0.2} & \cellcolor{gray!10}\underline{77.3}\blue{0.4} & \cellcolor{gray!10}\textbf{98.8}\blue{0.3} & \cellcolor{gray!10}\underline{32.4}\blue{22.4} & \cellcolor{gray!10}\textbf{94.3}\blue{1.8} & \cellcolor{gray!10}\underline{91.5}\blue{0.9} & \cellcolor{gray!10}\underline{80.3}\blue{6.2} \\

& \ourmethod & 74K & \textbf{93.9}\darkred{\textbf{0.8}} & \textbf{75.7}\darkblue{\textbf{4.1}} & \textbf{98.8}\darkblue{\textbf{0.1}} & \textbf{79.6}\darkred{\textbf{1.9}} & \textbf{98.8}\darkblue{\textbf{0.3}} & \textbf{48.2}\darkblue{\textbf{6.6}} & \underline{93.6}\darkblue{\textbf{2.5}} & \textbf{92.0}\darkblue{\textbf{0.4}} & \textbf{85.1}\darkblue{\textbf{1.4}} \\

\hline

\multirow{5}{*}{\rotatebox{90} {ViT-Large}} 
& \cellcolor{gray!10}Full Fine-Tuning* & \cellcolor{gray!10}303.3M & \cellcolor{gray!10}94.4 & \cellcolor{gray!10}88.9 & \cellcolor{gray!10}99.2 & \cellcolor{gray!10}81.8 & \cellcolor{gray!10}99.0 & \cellcolor{gray!10}68.3 & \cellcolor{gray!10}96.4 & \cellcolor{gray!10}93.6 & \cellcolor{gray!10}90.2 \\
\cline{2-12} 
\noalign{\vskip 0.1mm}

& Linear Probing* & - & 91.1\blue{3.3} & 37.9\blue{51.0} & \underline{97.8}\blue{1.4} & 73.3\blue{8.5} & 92.6\blue{6.4} & 24.6\blue{43.7} & 82.0\blue{14.4} & 84.3\blue{9.3} & 73.0\blue{17.2} \\

& \cellcolor{gray!10}LoRA* & \cellcolor{gray!10}1.57M & \cellcolor{gray!10}\underline{94.8}\red{0.4} & \cellcolor{gray!10}73.3\blue{15.6} & \cellcolor{gray!10}\textbf{99.1}\blue{0.1} & \cellcolor{gray!10}81.8\red{0.0} & \cellcolor{gray!10}98.6\blue{0.4} & \cellcolor{gray!10}42.3\blue{26.0} & \cellcolor{gray!10}94.7\blue{1.7} & \cellcolor{gray!10}\textbf{94.9}\red{1.3} & \cellcolor{gray!10}84.9\blue{5.3} \\

& FourierFT* & 480K & \underline{94.8}\red{0.4} & \underline{79.1}\blue{9.8} & \textbf{99.1}\blue{0.1} & \underline{81.9}\red{0.1} & \underline{98.7}\blue{0.3} & \underline{51.3}\blue{17.0} & \textbf{95.2}\blue{1.2} & \underline{93.4}\blue{0.2} & \underline{86.7}\blue{3.5} \\

& \cellcolor{gray!10}\ourmethod & \cellcolor{gray!10}197K & \cellcolor{gray!10}\textbf{94.9}\darkred{\textbf{0.5}} & \cellcolor{gray!10}\textbf{82.5}\darkblue{\textbf{6.4}} & \cellcolor{gray!10}\textbf{99.1}\darkblue{\textbf{0.1}} & \cellcolor{gray!10}\textbf{82.1}\darkred{\textbf{0.3}} & \cellcolor{gray!10}\textbf{98.9}\darkblue{\textbf{0.1}} & \cellcolor{gray!10}\textbf{59.8}\darkblue{\textbf{8.5}} & \cellcolor{gray!10}\underline{94.9}\darkblue{\textbf{1.5}} & \cellcolor{gray!10}93.3\darkblue{\textbf{0.3}} & \cellcolor{gray!10}\textbf{88.2}\darkblue{\textbf{2.0}} \\

\Xhline{1.2pt}
\end{tabular}
}
\vspace{-1.3em}
\end{table*}

\vspace{-1em}
\paragraph{Obs. \ding{182} \ourmethod demonstrates strong efficiency in NLP tasks.} It is evident that \ourmethod outperforms the baseline across all datasets and pretrained models in both tasks. Compared to Full Fine-Tuning, \ourmethod achieves a performance improvement ranging from $0.1\% \sim 8.3\%$ while using only $0.04\% \sim 0.06\%$ of the total parameters. Compared to PEFT baselines, including LoRA, \ourmethod achieves a performance improvement of up to $24.7\%$ and an average improvement of $1.25\% \sim 10.88\%$, using only $6.25\% \sim 25\%$ of the parameters. Specifically, in the commonsense reasoning task using LLaMA3-8B as the pretrained model, \ourmethod achieves a significant $10.7\%$ performance boost on \textsc{arc-c} with just $6.25\%$ of LoRA's parameter budget. In the NLU task, although \ourmethod uses more parameters than FourierFT, it demonstrates significant performance gains across all pretrained models and datasets. For instance, using RoBERTa-Base, \ourmethod improves performance by $7.7\%$ on \textsc{rte}.

\vspace{-1em}
\paragraph{Obs. \ding{183} \ourmethod can be scaled up to fine-tune large pretrained models.} \ourmethod scales effectively to fine-tuning larger pretrained models. In the NLU task, when the pretrained model changes to RoBERTa-Large from Base, nearly all PEFT methods show a performance drop compared to Full Fine-Tuning, with the largest decrease reaching $3.7\%$. In contrast, \ourmethod still achieves performance improvements of $0.1\% \sim 3.8\%$ across all datasets. In the commonsense reasoning task, when the model size increases to 8B, \ourmethod continues to outperform LoRA by $2.4\% \sim 10.7\%$.

\vspace{-0.5em}
\subsection{\ourmethod Achieves Efficiency in CV Tasks (\texorpdfstring{$\boldsymbol{\mathcal{RQ}2}$}{})}
\vspace{-0.5em}
To answer $\mathcal{RQ}$2, we design two tasks: Image Classification and Subject-Driven Image Generation. In the image classification task, following \cite{fourierft}, we select ViT-Base and ViT-Large \cite{vit}, two popular CV foundation models, which are pretrained on the ImageNet-21K \cite{imagenet21k} dataset. We then compare \ourmethod with Full Fine-Tuning, Linear Probing (fine-tuning only the classification head), LoRA, and FourierFT. The results are presented in Table \ref{tab:result-icl}, with more implementation details available in Appendix \ref{appendix-hyper-icl}. In the subject-driven image generation task \cite{dreambooth}, following \cite{fourierft} and \cite{moslora}, we use the \href{https://huggingface.co/stabilityai/stable-diffusion-xl-base-1.0}{SDXL} model \cite{sdxl-paper} as our backbone, and then fine-tune it using both LoRA and \ourmethod. The objective is to generate images based on specified prompts for a particular subject, which is defined using a set of reference images. Initially, we fine-tune a text-to-image model by pairing the input images with text prompts that include a unique identifier (e.g., ``A photo of a [V] dog''). Subsequently, the model can generate images corresponding to other prompts that incorporate the same unique identifier, thereby producing images of the defined subject. The results are presented in Figure \ref{fig:sd}, and more generated cases are in Appendix \ref{sec:app-sd}. Our observations can be summarized as follows:
\vspace{-1em}
\paragraph{Obs. \ding{184} \ourmethod achieves the best performance, excluding Full Fine-Tuning, with the least number of parameters.} It is evident that \ourmethod outperforms all other PEFT baseline methods across all eight datasets with the lowest parameter count ($12.7\%$ of LoRA and $31.0\%$ of FourierFT) when using both the Base and Large models. Compared to Full Fine-Tuning, \ourmethod uses only $0.086\%$ of the parameters and achieves a performance improvement of $0.4\% \sim 2.4\%$ on some datasets, with only a $1.6\% \sim 2.2\%$ gap in average performance. When using ViT-Base on \textsc{stanfordcars}, other baselines show a significant performance drop of $29.3\% \sim 67.7\%$ compared to Full Fine-Tuning. In contrast, \ourmethod only experiences a moderate drop of $5.1\%$. Compared to PEFT methods, \ourmethod achieves an average performance improvement of $1.73\% \sim 9.66\%$.

\begin{figure*}[!t]
  \centering
  \includegraphics[width=1\linewidth]{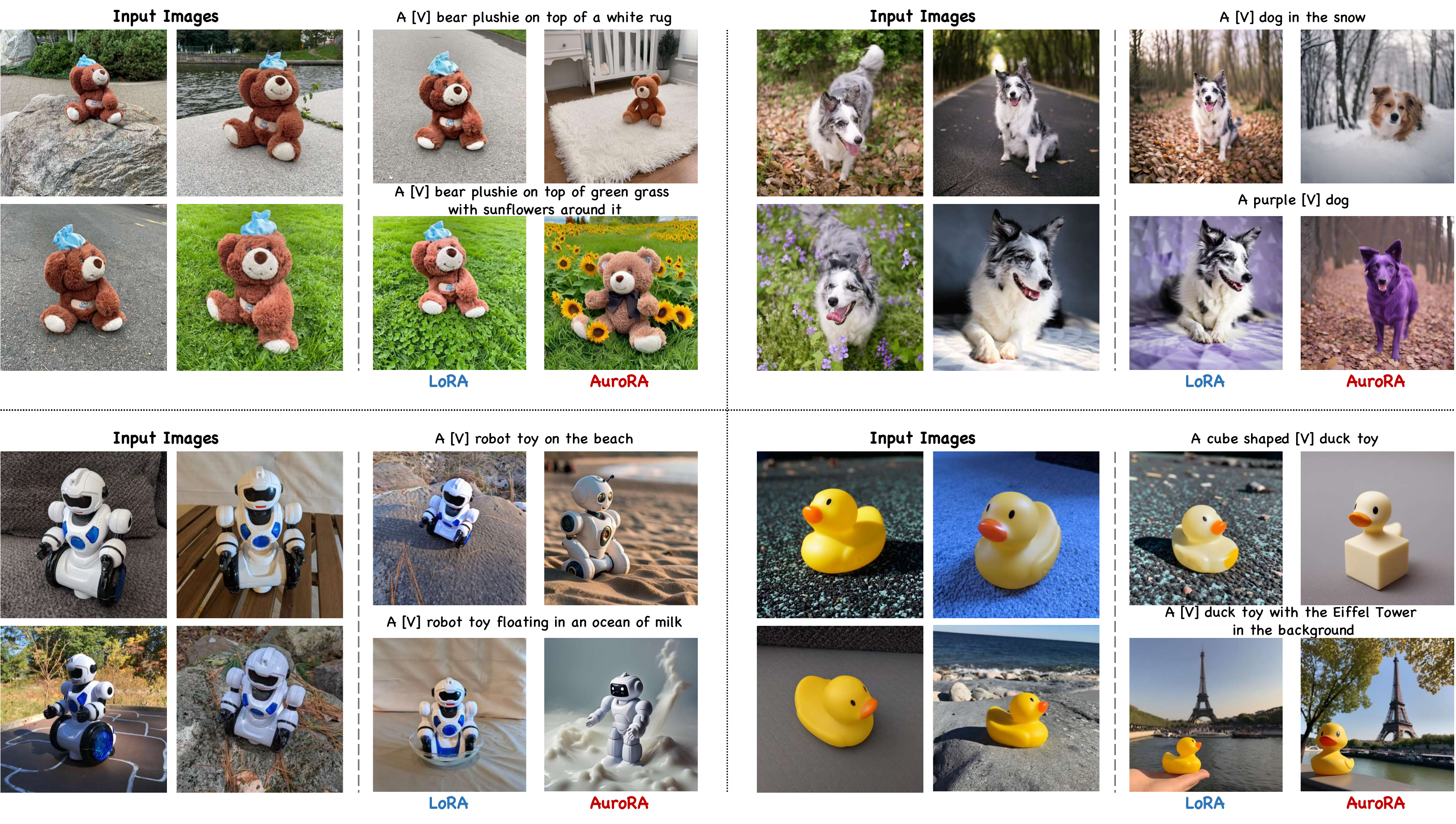}
  \vspace{-2em}
  \caption{Results of LoRA and \ourmethod in the subject-driven image generation task. \ourmethod aligns better with the prompt.}
   \label{fig:sd}
   \vspace{-1.5em}
\end{figure*}
\vspace{-1em}
\paragraph{Obs. \ding{185} \ourmethod demonstrates stronger adaptability in the text-to-image domain.} We observe that in the subject-driven image generation task, \ourmethod aligns better with the environment specified in the prompt. Specifically, when given the prompt ``A [V] bear plushie on top of green grass with sunflowers around it'', LoRA generates an environment with only green grass but no sunflowers. In contrast, \ourmethod successfully generates green grass with sunflowers.

\vspace{-0.5em}
\subsection{Study}
\vspace{-0.5em}
\paragraph{Ablation Study ($\boldsymbol{\mathcal{RQ}3}$)}\label{sec:study-ablation} 
To evaluate the contribution of different modules in \ourmethod, we introduce two variants: (1) \ourmethod w/o $\mathcal{F}$, and (2) \ourmethod w/o $\mathcal{L}$, which correspond to the removal of the fixed and learnable nonlinearity in \ourmethod, respectively. We compare these two variants with \ourmethod by fine-tuning ViT-Base on \textsc{oxfordpets}, \textsc{cifar10}, \textsc{dtd}, and \textsc{eurosat} in the image classification task. From Table \ref{tab:study-ablation}, we observe that: \ding{182} removing any component results in a performance drop for \ourmethod; \ding{183} \ourmethod w/o $\mathcal{L}$ consistently underperforms across all datasets, indicating that the learnable nonlinearity plays a more crucial role in the success of our method. Specifically, the learnable nonlinearity enables fine fitting, while the fixed nonlinearity contributes to coarse fitting.
\vspace{-0.5em} 
\begin{table}[!h]
\centering
\begin{minipage}[t]{0.48\linewidth}
  \centering
  \vspace{-0.5em}
  \caption{Comparison of different settings.}\vspace{-0.5em}
  \label{tab:study-ablation}
  \resizebox{\linewidth}{!}{
  \begin{tabular}{l|cccccccc}
\toprule
\textbf{Setting} & \textbf{OxfordPets} & \textbf{CIFAR10} & \textbf{DTD} & \textbf{EuroSAT} \\
\midrule
\ourmethod & 93.9 & 98.8 & 79.6 & 98.8 \\
\ourmethod w/o $\mathcal{F}$ & 93.3 $_{\downarrow 0.6}$ & 98.4 $_{\downarrow 0.4}$ & 78.9 $_{\downarrow 0.7}$ & 98.3 $_{\downarrow 0.5}$ \\
\ourmethod w/o $\mathcal{L}$ & 93.1 $_{\downarrow 0.8}$ & 98.2 $_{\downarrow 0.6}$ & 77.8 $_{\downarrow 1.8}$ & 98.0 $_{\downarrow 0.8}$ \\
\bottomrule
\end{tabular}
}
\end{minipage}%
\hfill
\begin{minipage}[t]{0.48\linewidth}
  \centering
  \vspace{-0.5em}
  \caption{Comparison of different activation functions.}\vspace{-0.5em}
  \label{tab:study-activation}
  \resizebox{\linewidth}{!}{
\begin{tabular}{l|cccc}
\toprule
\textbf{Setting} & \textbf{StanfordCars} & \textbf{FGVC} & \textbf{RESISC45} & \textbf{CIFAR100} \\
\midrule
\ourmethod & 75.7 & 48.2 & 93.6 & 92.0 \\
\ourmethod -lr & 75.6 $_{\downarrow 0.1}$ & 47.8 $_{\downarrow 0.4}$ & 93.4 $_{\downarrow 0.2}$ & 91.9 $_{\downarrow 0.1}$ \\
\ourmethod -sm & 75.2 $_{\downarrow 0.5}$ & 47.7 $_{\downarrow 0.5}$ & 92.9 $_{\downarrow 0.7}$ & 91.7 $_{\downarrow 0.3}$ \\
\bottomrule
\end{tabular}
}
\end{minipage}
\end{table}

\vspace{-2.5em}
\paragraph{Effect of Activation Function ($\boldsymbol{\mathcal{RQ}4}$)} \label{sec:study-activation function} 
We investigate the impact of the choice of activation function in the fixed nonlinearity on \ourmethod's performance. Specifically, we introduce two variants: (1) \ourmethod-lr, and (2) \ourmethod-sm, which correspond to replacing the activation function in the fixed nonlinearity ($\tanh$) with LeakyReLU and Sigmoid, respectively. We compare these variants with \ourmethod by fine-tuning the ViT-Base model on \textsc{stanfordcars}, \textsc{fgvc}, \textsc{resisc45}, and \textsc{cifar100} in the image classification task. From Table \ref{tab:study-activation}, we observe that Sigmoid results in the lowest performance, while $\tanh$ achieves the highest performance. Therefore, we choose $\tanh$ as the activation function for fixed nonlinearity in all our experiments.
\vspace{-1.0em}
\paragraph{Sensitivity to Rank \& Comparison with Linear LoRA Variants ($\boldsymbol{\mathcal{RQ}5}$)} 
To further investigate the impact of introducing nonlinearity, we examine its sensitivity to rank and compare it with the linear LoRA variant under identical experimental settings. Specifically, we select LLaMA3-8B as the pretrained model and fine-tune it using \ourmethod, MoSLoRA, and LoRA, varying the rank among $\{2, 4, 8, 16\}$. We evaluate their performance across four datasets. From Figure \ref{fig:vslinear}, we observe the following: \ding{182} the introduction of nonlinearity results in smaller performance fluctuations as the rank varies, i.e., more robustness to rank; \ding{183} \ourmethod consistently outperforms across almost all rank settings and datasets, indicating that incorporating nonlinearity further enhances the model's expressiveness compared to linear approaches.

\begin{figure*}[!t]
  \centering
  \includegraphics[width=1\linewidth]{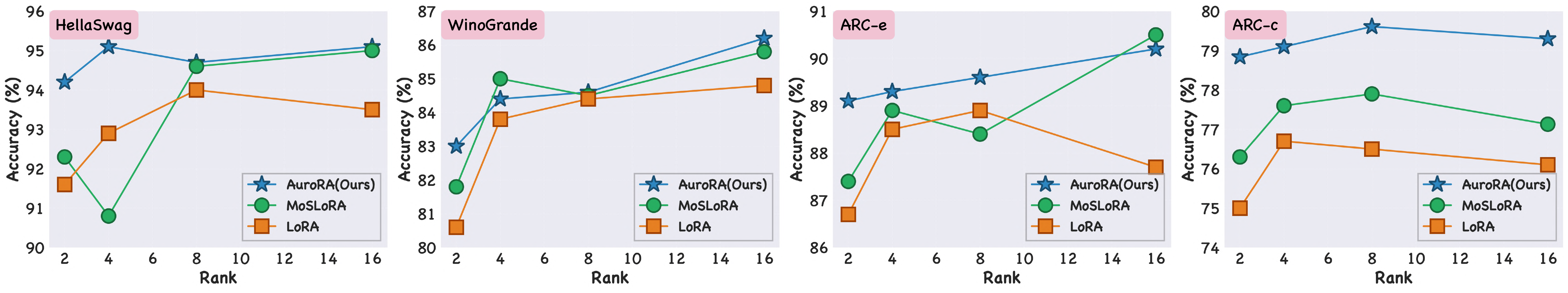}
  \vspace{-2em}
  \caption{Performance comparison of different methods with varying ranks. We use LLaMA 3-8B as the pretrained model and fine-tune it using \ourmethod, MoSLoRA, and LoRA methods on the \textsc{hellaswag}, \textsc{winogrande}, \textsc{arc-e} and \textsc{arc-c} datasets, with ranks $\{2,4,8,16\}$.}
   \label{fig:vslinear}
   \vspace{-1.2em}
\end{figure*}

\vspace{-0.5em}
\section{Related Work}
\vspace{-0.5em}
\subsection{Parameter-Efficient Fine-Tuning}
\vspace{-0.5em}

Parameter-Efficient Fine-Tuning (PEFT) has emerged as a pivotal strategy for addressing the computational challenges associated with fine-tuning large-scale pretrained models. PEFT methodologies can be broadly categorized into:
\ding{182} \textbf{Additive PEFT} approaches introduce new, trainable modules to a frozen base model \cite{peft1, ssf, adapter,adapter1,additive-3,additive-4,gpt}. Common strategies include adapter-based techniques, such as AdapterFusion \cite{adapaterfusion} and Hyperformer \cite{hyperformer}; prompt-based methods, like Prefix-tuning \cite{prefix} and p-tuning v2 \cite{ptuningv2}.
\ding{183} \textbf{Selective PEFT} methods optimize a chosen subset of a pretrained model's parameters while keeping the majority frozen \cite{select-1,select-2,select-3,select-5,select-6,hft}. This selection is often achieved through unstructured masking based on criteria like parameter significance, as seen in FishMask \cite{select-4} and Child-tuning \cite{child}, or via structured techniques that group parameters, such as Bitfit \cite{bitfit} and SPT \cite{spt}.
\ding{184} \textbf{Reparameterized PEFT} techniques transform model weights into more efficient, often low-rank, representations during fine-tuning, without altering the core architecture for inference \cite{vera,dylora,moelora-1,moelora-2,moelora-3}. A prominent example is LoRA \cite{lora}, which introduces low-rank matrices for updates. \ding{185} \textbf{Memory-Efficient PEFT} methods focus on reducing the memory footprint of fine-tuning by optimizing the training dynamics rather than the model architecture \cite{badam,lisa,ows,galore}. A representative example is GaLore \cite{galore}, which projects gradients into low-rank subspaces to lower optimizer-state memory while preserving full-parameter adaptability. \ding{186} \textbf{Hybrid PEFT} methods integrate multiple strategies from different PEFT categories to capitalize on their respective advantages \cite{llm_adapters,hybrid-1,hybrid-2}. For instance, NOAH \cite{hybrid-3} and AUTOPEFT \cite{hybrid-4}, leverage neural architecture search to identify effective PEFT combinations for specific tasks. 
In this paper, we primarily focus on LoRA, a reparameterized PEFT method.

\vspace{-0.5em}
\subsection{LoRA and its Variants}
\vspace{-0.5em}
The core idea of LoRA \cite{lora} is to approximate weight updates using mergeable, low-rank matrix pathways. Its variants can be broadly categorized into several types:
\textbf{\ding{182} Novel Branch Designs} primarily focus on remodeling or reformulating the original low-rank matrix approximation pathway, with notable examples including VeRA \cite{vera}, FourierFT \cite{fourierft}, PiSSA \cite{pissa}, and DoRA \cite{dora}.
\textbf{\ding{183} Multi-Task Variants}, exemplified by MoELoRA \cite{moelora-1}, MoA \cite{moelora-2}, CA-LoRA \cite{calora}, and HydraLoRA \cite{hydralora}, are engineered to enhance cross-task generalization—particularly in scenarios such as multi-task learning, domain adaptation, and continual learning—often through the strategic employment of LoRA module mixtures or ensembles.
\textbf{\ding{184} Linear Variants}, including AdaLoRA \cite{adalora}, SaLoRA \cite{salora}, MoSLoRA \cite{moslora}, and FLoRA \cite{flora}, typically augment the LoRA framework by incorporating an additional linear matrix between the two original low-rank factors, thereby bolstering information capture during the training phase.
Beyond these, several nonlinear LoRA variants have recently emerged, including LoRAN \cite{loran}, SineLoRA \cite{sinelora}, LoDA \cite{loda}, NEAT \cite{neat}, and CoLA \cite{cola-nonlinear}.
However, these recent nonlinear variants do not resolve inherent low-rank bottleneck in LoRA. In contrast, our method pairs nonlinearities with a focus on LoRA's fundamental structural limitations, achieving a superior balance between performance and parameter efficiency.

\vspace{-0.5em}
\section{Conclusion}
\vspace{-0.5em}
In this paper, we revisit LoRA from the perspective of linear mappings and introduce nonlinearity into LoRA by proposing \ourmethod, an MLP-like structure. \ourmethod incorporates an adaptive nonlinear layer that includes both fixed and learnable nonlinearities between the two low-rank matrices. \ourmethod achieves a superior balance between performance and parameters across tasks in both the NLP and CV domains. We hope that \ourmethod will inspire further exploration of nonlinear extensions to LoRA.
\vspace{-1em}
\paragraph{Limitation} A potential limitation is that, due to limited computational resources, we do not evaluate performance on larger pretrained models in this study, leaving this exploration for future work.
\vspace{-1em}
\paragraph{Broader Impact} As a novel nonlinear method, \ourmethod is envisioned for broad future applications in key sectors such as healthcare and finance. 
It is anticipated to deliver more accurate and reliable services while significantly reducing resource consumption, thereby better serving human society.

\begin{ack}
This work is supported by the State Key Laboratory of General Artificial Intelligence; and the National Natural Science Foundation of China (Grant No. 62276006).
\end{ack}

\bibliography{ref}
\bibliographystyle{unsrt}

\section*{NeurIPS Paper Checklist}

The checklist is designed to encourage best practices for responsible machine learning research, addressing issues of reproducibility, transparency, research ethics, and societal impact. Do not remove the checklist: {\bf The papers not including the checklist will be desk rejected.} The checklist should follow the references and follow the (optional) supplemental material.  The checklist does NOT count towards the page
limit. 

Please read the checklist guidelines carefully for information on how to answer these questions. For each question in the checklist:
\begin{itemize}
    \item You should answer \answerYes{}, \answerNo{}, or \answerNA{}.
    \item \answerNA{} means either that the question is Not Applicable for that particular paper or the relevant information is Not Available.
    \item Please provide a short (1–2 sentence) justification right after your answer (even for NA). 
\end{itemize}

{\bf The checklist answers are an integral part of your paper submission.} They are visible to the reviewers, area chairs, senior area chairs, and ethics reviewers. You will be asked to also include it (after eventual revisions) with the final version of your paper, and its final version will be published with the paper.

The reviewers of your paper will be asked to use the checklist as one of the factors in their evaluation. While "\answerYes{}" is generally preferable to "\answerNo{}", it is perfectly acceptable to answer "\answerNo{}" provided a proper justification is given (e.g., "error bars are not reported because it would be too computationally expensive" or "we were unable to find the license for the dataset we used"). In general, answering "\answerNo{}" or "\answerNA{}" is not grounds for rejection. While the questions are phrased in a binary way, we acknowledge that the true answer is often more nuanced, so please just use your best judgment and write a justification to elaborate. All supporting evidence can appear either in the main paper or the supplemental material, provided in appendix. If you answer \answerYes{} to a question, in the justification please point to the section(s) where related material for the question can be found.

IMPORTANT, please:
\begin{itemize}
    \item {\bf Delete this instruction block, but keep the section heading ``NeurIPS Paper Checklist"},
    \item  {\bf Keep the checklist subsection headings, questions/answers and guidelines below.}
    \item {\bf Do not modify the questions and only use the provided macros for your answers}.
\end{itemize}


\begin{enumerate}

\item {\bf Claims}
    \item[] Question: Do the main claims made in the abstract and introduction accurately reflect the paper's contributions and scope?
    \item[] Answer: \answerYes{} 
    \item[] Justification: In this paper, we introduce a novel non-linear parameter-efficient fine-tuning method and we claim the contributions and scope in the abstract and introduction sections (See Abstract and Introduction Section).
    \item[] Guidelines:
    \begin{itemize}
        \item The answer NA means that the abstract and introduction do not include the claims made in the paper.
        \item The abstract and/or introduction should clearly state the claims made, including the contributions made in the paper and important assumptions and limitations. A No or NA answer to this question will not be perceived well by the reviewers. 
        \item The claims made should match theoretical and experimental results, and reflect how much the results can be expected to generalize to other settings. 
        \item It is fine to include aspirational goals as motivation as long as it is clear that these goals are not attained by the paper. 
    \end{itemize}

\item {\bf Limitations}
    \item[] Question: Does the paper discuss the limitations of the work performed by the authors?
    \item[] Answer: \answerYes{} 
    \item[] Justification: In this work, we systematically discuss the limitations of our research and outline directions for future work (See Conclusion Section).
    \item[] Guidelines:
    \begin{itemize}
        \item The answer NA means that the paper has no limitation while the answer No means that the paper has limitations, but those are not discussed in the paper. 
        \item The authors are encouraged to create a separate "Limitations" section in their paper.
        \item The paper should point out any strong assumptions and how robust the results are to violations of these assumptions (e.g., independence assumptions, noiseless settings, model well-specification, asymptotic approximations only holding locally). The authors should reflect on how these assumptions might be violated in practice and what the implications would be.
        \item The authors should reflect on the scope of the claims made, e.g., if the approach was only tested on a few datasets or with a few runs. In general, empirical results often depend on implicit assumptions, which should be articulated.
        \item The authors should reflect on the factors that influence the performance of the approach. For example, a facial recognition algorithm may perform poorly when image resolution is low or images are taken in low lighting. Or a speech-to-text system might not be used reliably to provide closed captions for online lectures because it fails to handle technical jargon.
        \item The authors should discuss the computational efficiency of the proposed algorithms and how they scale with dataset size.
        \item If applicable, the authors should discuss possible limitations of their approach to address problems of privacy and fairness.
        \item While the authors might fear that complete honesty about limitations might be used by reviewers as grounds for rejection, a worse outcome might be that reviewers discover limitations that aren't acknowledged in the paper. The authors should use their best judgment and recognize that individual actions in favor of transparency play an important role in developing norms that preserve the integrity of the community. Reviewers will be specifically instructed to not penalize honesty concerning limitations.
    \end{itemize}

\item {\bf Theory assumptions and proofs}
    \item[] Question: For each theoretical result, does the paper provide the full set of assumptions and a complete (and correct) proof?
    \item[] Answer: \answerYes{} 
    \item[] Justification: In this work, we analyze our proposed method from a theoretical perspective and provide complete and detailed proofs (See Method Section and Appendix).
    \item[] Guidelines:
    \begin{itemize}
        \item The answer NA means that the paper does not include theoretical results. 
        \item All the theorems, formulas, and proofs in the paper should be numbered and cross-referenced.
        \item All assumptions should be clearly stated or referenced in the statement of any theorems.
        \item The proofs can either appear in the main paper or the supplemental material, but if they appear in the supplemental material, the authors are encouraged to provide a short proof sketch to provide intuition. 
        \item Inversely, any informal proof provided in the core of the paper should be complemented by formal proofs provided in appendix or supplemental material.
        \item Theorems and Lemmas that the proof relies upon should be properly referenced. 
    \end{itemize}

    \item {\bf Experimental result reproducibility}
    \item[] Question: Does the paper fully disclose all the information needed to reproduce the main experimental results of the paper to the extent that it affects the main claims and/or conclusions of the paper (regardless of whether the code and data are provided or not)?
    \item[] Answer: \answerYes{} 
    \item[] Justification: We provide the code necessary for replicating the studies described in this paper via an anonymous link, and we detail the experimental setup for the replication in the article itself (See Appendix).
    \item[] Guidelines:
    \begin{itemize}
        \item The answer NA means that the paper does not include experiments.
        \item If the paper includes experiments, a No answer to this question will not be perceived well by the reviewers: Making the paper reproducible is important, regardless of whether the code and data are provided or not.
        \item If the contribution is a dataset and/or model, the authors should describe the steps taken to make their results reproducible or verifiable. 
        \item Depending on the contribution, reproducibility can be accomplished in various ways. For example, if the contribution is a novel architecture, describing the architecture fully might suffice, or if the contribution is a specific model and empirical evaluation, it may be necessary to either make it possible for others to replicate the model with the same dataset, or provide access to the model. In general. releasing code and data is often one good way to accomplish this, but reproducibility can also be provided via detailed instructions for how to replicate the results, access to a hosted model (e.g., in the case of a large language model), releasing of a model checkpoint, or other means that are appropriate to the research performed.
        \item While NeurIPS does not require releasing code, the conference does require all submissions to provide some reasonable avenue for reproducibility, which may depend on the nature of the contribution. For example
        \begin{enumerate}
            \item If the contribution is primarily a new algorithm, the paper should make it clear how to reproduce that algorithm.
            \item If the contribution is primarily a new model architecture, the paper should describe the architecture clearly and fully.
            \item If the contribution is a new model (e.g., a large language model), then there should either be a way to access this model for reproducing the results or a way to reproduce the model (e.g., with an open-source dataset or instructions for how to construct the dataset).
            \item We recognize that reproducibility may be tricky in some cases, in which case authors are welcome to describe the particular way they provide for reproducibility. In the case of closed-source models, it may be that access to the model is limited in some way (e.g., to registered users), but it should be possible for other researchers to have some path to reproducing or verifying the results.
        \end{enumerate}
    \end{itemize}

\item {\bf Open access to data and code}
    \item[] Question: Does the paper provide open access to the data and code, with sufficient instructions to faithfully reproduce the main experimental results, as described in supplemental material?
    \item[] Answer: \answerYes{} 
    \item[] Justification: For the datasets disclosed in the article, we have provided information regarding their sources and origins (See Appendix).
    \item[] Guidelines:
    \begin{itemize}
        \item The answer NA means that paper does not include experiments requiring code.
        \item Please see the NeurIPS code and data submission guidelines (\url{https://nips.cc/public/guides/CodeSubmissionPolicy}) for more details.
        \item While we encourage the release of code and data, we understand that this might not be possible, so “No” is an acceptable answer. Papers cannot be rejected simply for not including code, unless this is central to the contribution (e.g., for a new open-source benchmark).
        \item The instructions should contain the exact command and environment needed to run to reproduce the results. See the NeurIPS code and data submission guidelines (\url{https://nips.cc/public/guides/CodeSubmissionPolicy}) for more details.
        \item The authors should provide instructions on data access and preparation, including how to access the raw data, preprocessed data, intermediate data, and generated data, etc.
        \item The authors should provide scripts to reproduce all experimental results for the new proposed method and baselines. If only a subset of experiments are reproducible, they should state which ones are omitted from the script and why.
        \item At submission time, to preserve anonymity, the authors should release anonymized versions (if applicable).
        \item Providing as much information as possible in supplemental material (appended to the paper) is recommended, but including URLs to data and code is permitted.
    \end{itemize}

\item {\bf Experimental setting/details}
    \item[] Question: Does the paper specify all the training and test details (e.g., data splits, hyperparameters, how they were chosen, type of optimizer, etc.) necessary to understand the results?
    \item[] Answer: \answerYes{} 
    \item[] Justification: We have specified all the training and test details (e.g., data splits, hyperparameters, how they were chosen, type of optimizer, etc.) necessary to understand the results (See Appendix).
    \item[] Guidelines:
    \begin{itemize}
        \item The answer NA means that the paper does not include experiments.
        \item The experimental setting should be presented in the core of the paper to a level of detail that is necessary to appreciate the results and make sense of them.
        \item The full details can be provided either with the code, in appendix, or as supplemental material.
    \end{itemize}

\item {\bf Experiment statistical significance}
    \item[] Question: Does the paper report error bars suitably and correctly defined or other appropriate information about the statistical significance of the experiments?
    \item[] Answer: \answerYes{} 
    \item[] Justification: Experimental results are tested multiple times to ensure stability and reliability (See Experiments).
    \item[] Guidelines:
    \begin{itemize}
        \item The answer NA means that the paper does not include experiments.
        \item The authors should answer "Yes" if the results are accompanied by error bars, confidence intervals, or statistical significance tests, at least for the experiments that support the main claims of the paper.
        \item The factors of variability that the error bars are capturing should be clearly stated (for example, train/test split, initialization, random drawing of some parameter, or overall run with given experimental conditions).
        \item The method for calculating the error bars should be explained (closed form formula, call to a library function, bootstrap, etc.)
        \item The assumptions made should be given (e.g., Normally distributed errors).
        \item It should be clear whether the error bar is the standard deviation or the standard error of the mean.
        \item It is OK to report 1-sigma error bars, but one should state it. The authors should preferably report a 2-sigma error bar than state that they have a 96\% CI, if the hypothesis of Normality of errors is not verified.
        \item For asymmetric distributions, the authors should be careful not to show in tables or figures symmetric error bars that would yield results that are out of range (e.g. negative error rates).
        \item If error bars are reported in tables or plots, The authors should explain in the text how they were calculated and reference the corresponding figures or tables in the text.
    \end{itemize}

\item {\bf Experiments compute resources}
    \item[] Question: For each experiment, does the paper provide sufficient information on the computer resources (type of compute workers, memory, time of execution) needed to reproduce the experiments?
    \item[] Answer: \answerYes{} 
    \item[] Justification: In this paper, we provide detailed information about the experimental resources, including GPU configurations used in our studies (See Appendix). 
    \item[] Guidelines:
    \begin{itemize}
        \item The answer NA means that the paper does not include experiments.
        \item The paper should indicate the type of compute workers CPU or GPU, internal cluster, or cloud provider, including relevant memory and storage.
        \item The paper should provide the amount of compute required for each of the individual experimental runs as well as estimate the total compute. 
        \item The paper should disclose whether the full research project required more compute than the experiments reported in the paper (e.g., preliminary or failed experiments that didn't make it into the paper). 
    \end{itemize}
    
\item {\bf Code of ethics}
    \item[] Question: Does the research conducted in the paper conform, in every respect, with the NeurIPS Code of Ethics \url{https://neurips.cc/public/EthicsGuidelines}?
    \item[] Answer: \answerYes{} 
    \item[] Justification: The study presented in this paper conforms to the NeurIPS Code of Ethics.
    \item[] Guidelines:
    \begin{itemize}
        \item The answer NA means that the authors have not reviewed the NeurIPS Code of Ethics.
        \item If the authors answer No, they should explain the special circumstances that require a deviation from the Code of Ethics.
        \item The authors should make sure to preserve anonymity (e.g., if there is a special consideration due to laws or regulations in their jurisdiction).
    \end{itemize}

\item {\bf Broader impacts}
    \item[] Question: Does the paper discuss both potential positive societal impacts and negative societal impacts of the work performed?
    \item[] Answer: \answerYes{} 
    \item[] Justification: We have provided the societal impacts of the work (See Conclusion).
    \item[] Guidelines:
    \begin{itemize}
        \item The answer NA means that there is no societal impact of the work performed.
        \item If the authors answer NA or No, they should explain why their work has no societal impact or why the paper does not address societal impact.
        \item Examples of negative societal impacts include potential malicious or unintended uses (e.g., disinformation, generating fake profiles, surveillance), fairness considerations (e.g., deployment of technologies that could make decisions that unfairly impact specific groups), privacy considerations, and security considerations.
        \item The conference expects that many papers will be foundational research and not tied to particular applications, let alone deployments. However, if there is a direct path to any negative applications, the authors should point it out. For example, it is legitimate to point out that an improvement in the quality of generative models could be used to generate deepfakes for disinformation. On the other hand, it is not needed to point out that a generic algorithm for optimizing neural networks could enable people to train models that generate Deepfakes faster.
        \item The authors should consider possible harms that could arise when the technology is being used as intended and functioning correctly, harms that could arise when the technology is being used as intended but gives incorrect results, and harms following from (intentional or unintentional) misuse of the technology.
        \item If there are negative societal impacts, the authors could also discuss possible mitigation strategies (e.g., gated release of models, providing defenses in addition to attacks, mechanisms for monitoring misuse, mechanisms to monitor how a system learns from feedback over time, improving the efficiency and accessibility of ML).
    \end{itemize}
    
\item {\bf Safeguards}
    \item[] Question: Does the paper describe safeguards that have been put in place for responsible release of data or models that have a high risk for misuse (e.g., pretrained language models, image generators, or scraped datasets)?
    \item[] Answer: \answerNA{} 
    \item[] Justification: This paper does not address issues related to this aspect.
    \item[] Guidelines:
    \begin{itemize}
        \item The answer NA means that the paper poses no such risks.
        \item Released models that have a high risk for misuse or dual-use should be released with necessary safeguards to allow for controlled use of the model, for example by requiring that users adhere to usage guidelines or restrictions to access the model or implementing safety filters. 
        \item Datasets that have been scraped from the Internet could pose safety risks. The authors should describe how they avoided releasing unsafe images.
        \item We recognize that providing effective safeguards is challenging, and many papers do not require this, but we encourage authors to take this into account and make a best faith effort.
    \end{itemize}

\item {\bf Licenses for existing assets}
    \item[] Question: Are the creators or original owners of assets (e.g., code, data, models), used in the paper, properly credited and are the license and terms of use explicitly mentioned and properly respected?
    \item[] Answer: \answerYes{} 
    \item[] Justification: All creators and original owners of the assets used in our paper, such as code, data, and models, have been properly credited. We have explicitly mentioned the licenses and terms of use for each asset and have ensured full compliance with these terms throughout our research.
    \item[] Guidelines:
    \begin{itemize}
        \item The answer NA means that the paper does not use existing assets.
        \item The authors should cite the original paper that produced the code package or dataset.
        \item The authors should state which version of the asset is used and, if possible, include a URL.
        \item The name of the license (e.g., CC-BY 4.0) should be included for each asset.
        \item For scraped data from a particular source (e.g., website), the copyright and terms of service of that source should be provided.
        \item If assets are released, the license, copyright information, and terms of use in the package should be provided. For popular datasets, \url{paperswithcode.com/datasets} has curated licenses for some datasets. Their licensing guide can help determine the license of a dataset.
        \item For existing datasets that are re-packaged, both the original license and the license of the derived asset (if it has changed) should be provided.
        \item If this information is not available online, the authors are encouraged to reach out to the asset's creators.
    \end{itemize}

\item {\bf New assets}
    \item[] Question: Are new assets introduced in the paper well documented and is the documentation provided alongside the assets?
    \item[] Answer: \answerNA{} 
    \item[] Justification: The research presented in this paper is not concerned with new assets.
    \item[] Guidelines:
    \begin{itemize}
        \item The answer NA means that the paper does not release new assets.
        \item Researchers should communicate the details of the dataset/code/model as part of their submissions via structured templates. This includes details about training, license, limitations, etc. 
        \item The paper should discuss whether and how consent was obtained from people whose asset is used.
        \item At submission time, remember to anonymize your assets (if applicable). You can either create an anonymized URL or include an anonymized zip file.
    \end{itemize}

\item {\bf Crowdsourcing and research with human subjects}
    \item[] Question: For crowdsourcing experiments and research with human subjects, does the paper include the full text of instructions given to participants and screenshots, if applicable, as well as details about compensation (if any)? 
    \item[] Answer: \answerNA{} 
    \item[] Justification: This paper does not involve experiments or research related to human subjects.
    \item[] Guidelines:
    \begin{itemize}
        \item The answer NA means that the paper does not involve crowdsourcing nor research with human subjects.
        \item Including this information in the supplemental material is fine, but if the main contribution of the paper involves human subjects, then as much detail as possible should be included in the main paper. 
        \item According to the NeurIPS Code of Ethics, workers involved in data collection, curation, or other labor should be paid at least the minimum wage in the country of the data collector. 
    \end{itemize}

\item {\bf Institutional review board (IRB) approvals or equivalent for research with human subjects}
    \item[] Question: Does the paper describe potential risks incurred by study participants, whether such risks were disclosed to the subjects, and whether Institutional Review Board (IRB) approvals (or an equivalent approval/review based on the requirements of your country or institution) were obtained?
    \item[] Answer: \answerNA{} 
    \item[] Justification: This paper does not address potential risks incurred by study participants.
    \item[] Guidelines:
    \begin{itemize}
        \item The answer NA means that the paper does not involve crowdsourcing nor research with human subjects.
        \item Depending on the country in which research is conducted, IRB approval (or equivalent) may be required for any human subjects research. If you obtained IRB approval, you should clearly state this in the paper. 
        \item We recognize that the procedures for this may vary significantly between institutions and locations, and we expect authors to adhere to the NeurIPS Code of Ethics and the guidelines for their institution. 
        \item For initial submissions, do not include any information that would break anonymity (if applicable), such as the institution conducting the review.
    \end{itemize}

\item {\bf Declaration of LLM usage}
    \item[] Question: Does the paper describe the usage of LLMs if it is an important, original, or non-standard component of the core methods in this research? Note that if the LLM is used only for writing, editing, or formatting purposes and does not impact the core methodology, scientific rigorousness, or originality of the research, declaration is not required.
    \item[] Answer: \answerNA{} 
    \item[] Justification: The core method development in this research does not involve LLMs as any important, original, or non-standard components.
    \item[] Guidelines:
    \begin{itemize}
        \item The answer NA means that the core method development in this research does not involve LLMs as any important, original, or non-standard components.
        \item Please refer to our LLM policy (\url{https://neurips.cc/Conferences/2025/LLM}) for what should or should not be described.
    \end{itemize}

\end{enumerate}
\newpage
\appendix

\section{Notations}
We summarize the notations used throughout the manuscript in Table \ref{tab:notation}.
\begin{table}[!htbp]\footnotesize
  \centering
  \caption{Notations commonly used in the \ourmethod method.}
  \setlength{\tabcolsep}{4pt}
  \renewcommand\arraystretch{1.3}
  \vspace{0.5em}
    \begin{tabular}{ll}
    \toprule
    Notation & Definition \\
    \midrule
    $\mathcal{W}_0 \in \mathbb{R}^{d_{\mathrm{out}}\times d_{\mathrm{in}}}$ 
      & Pretrained weight matrix \\
    $\Delta\mathcal{W} \in \mathbb{R}^{d_{\mathrm{out}}\times d_{\mathrm{in}}}$ 
      & Weight update matrix \\
    $\mathbf{A}\in \mathbb{R}^{\widetilde r \times d_{\mathrm{in}}}$ 
      & Downward projector (low‐rank matrix) \\
    $\mathbf{B}\in \mathbb{R}^{d_{\mathrm{out}}\times \widetilde r}$ 
      & Upward projector (low‐rank matrix) \\
    $r$ & Original LoRA rank ($r \ll \min(d_{\mathrm{in}},d_{\mathrm{out}})$) \\
    $\widetilde r$ & Compressed hidden dimension ($\widetilde r \ll r$) \\
    $\mathbf{x}\in \mathbb{R}^{d_{\mathrm{in}}}$ 
      & Input vector \\
    $\mathbf{h}\in \mathbb{R}^{d_{\mathrm{out}}}$ 
      & Output vector \\
    $\sigma(\cdot)$ & Adaptive Nonlinear Layer (ANL) mapping $\mathbb{R}^{\widetilde r}\!\to\!\mathbb{R}^{\widetilde r}$ \\
    $\mathcal{P}_{\mathrm{down}}$ & Projection onto $\widetilde r$‐dim hidden space (matrix $\mathbf{A}$) \\
    $\mathcal{P}_{\mathrm{self}}$ & Self‐projection in hidden space (matrix $\mathbf{H}\in\mathbb{R}^{\widetilde r\times \widetilde r}$) \\
    $\mathcal{P}_{\mathrm{up}}$   & Projection back to $d_{\mathrm{out}}$ (matrix $\mathbf{B}$) \\
    $\mathcal{F}(\cdot)$ & Fixed nonlinearity (e.g.\ $\tanh$) \\
    $\mathcal{L}(\cdot)$ & Learnable nonlinearity (B‐spline based) \\
    $\mathbf{w}_s\in \mathbb{R}^{\widetilde r}$ 
      & Spline weight vector in $\mathcal{L}$ \\
    $\mathbf{s}(\mathbf{Z})$ 
      & Spline basis functions applied to each component of $\mathbf{Z}$ \\
    $T$ & Number of training epochs \\
    \bottomrule
    \end{tabular}
  \label{tab:notation}
\end{table}

\section{Complete Process}
\label{app-complete}
In this section, we first provide further details on both the static weight merging operation performed after the training phase and the actual process at inference time. We then offer empirical validation for our approach. Finally, the complete algorithm workflow of our \ourmethod is presented in Algo. \ref{alg:aurora}.
\subsection{Weights Merging \& Inference Phase} 
\label{app-inference}
During the training phase, to enhance training flexibility, \ourmethod utilizes a \textit{dynamic}, input-dependent update mechanism formulated as $\mathbf{B}  \cdot \sigma(\mathbf{A} \mathbf{x})$.
After the training phase, once all learnable parameters are fixed and considered optimized, \ourmethod transitions to a \textit{static} form for the inference stage. This static form, given by $\Delta \mathcal{W} = \mathbf{B}  \cdot \sigma(\mathbf{A})$, facilitates the seamless integration of \ourmethod into the pre-trained weight $\mathcal{W}_0$, consistent with standard LoRA.
Therefore, after the weights are merged, the effective forward propagation process at inference time is formally given by:
\begin{equation}\label{eq:inference}
    \mathbf{h} = \mathcal{W}_0 \mathbf{x} + \Delta \mathcal{W} \mathbf{x} = \mathcal{W}_0 \mathbf{x} +\mathbf{B} \cdot \sigma(\mathbf{A}) \mathbf{x}.
\end{equation}
Such an approach eliminates any additional computational overhead during inference, while concurrently preserving superior performance.
\paragraph{Why is This Strategy Effective?}
To validate our strategy of \textit{dynamic} training combined with \textit{static} inference, we empirically compare it against both \textit{fully dynamic} and \textit{fully static} approaches.
For this purpose, we introduce two comparative variants: (1) \ourmethod-$\mathcal{D}$, which maintains dynamic processing throughout both training and inference (i.e., its forward pass is consistently $\mathbf{h} = \mathcal{W}_0 \mathbf{x} + \mathbf{B} \cdot \sigma(\mathbf{A} \mathbf{x})$), and (2) \ourmethod-$\mathcal{S}$, which consistently employs a static form for both phases (i.e., $\mathbf{h} = \mathcal{W}_0 \mathbf{x} + \mathbf{B} \cdot \sigma(\mathbf{A}) \mathbf{x}$).
We evaluate \ourmethod, \ourmethod-$\mathcal{D}$, and \ourmethod-$\mathcal{S}$ by fine-tuning LLaMA3-8B for Commonsense Reasoning on datasets including \textsc{arc-e}, \textsc{obqa}, \textsc{siqa}, and \textsc{arc-c}, and record both accuracy and total training and inference time.
From Table \ref{tab:study-inference}, we observe that: \ding{182} \ourmethod exhibits nearly identical performance to \ourmethod-$\mathcal{D}$ and significantly outperforms \ourmethod-$\mathcal{S}$ in practice; \ding{183} \ourmethod achieves a runtime comparable to that of \ourmethod-$\mathcal{S}$ and is markedly faster than \ourmethod-$\mathcal{D}$.
Therefore, our practical implementation adopts this dynamic training with static inference strategy, which can be seamlessly merged into pre-trained weights after the training phase (consistent with standard LoRA), and thereby also achieves an effective trade-off between performance and computational cost.

\begin{table*}[!t]
\centering
\caption{Comparison of accuracy and total time consumption for different settings on eight Commonsense Reasoning datasets, using LLaMA3-8B as pre-trained model.}
\vspace{0.5em}
\label{tab:study-inference}
\renewcommand\tabcolsep{5.3pt}
\renewcommand\arraystretch{1.1}

\resizebox{\linewidth}{!}{
\begin{tabular}{l|c|ccccccccc}
\Xhline{1.2pt}
\rowcolor{CadetBlue!20} 
\textbf{Setting} & \textbf{Time} &  \textbf{BoolQ} & \textbf{PIQA} & \textbf{SIQA} & \textbf{HellaSwag} & \textbf{WinoGrande} & \textbf{ARC-e} & \textbf{ARC-c} & \textbf{OBQA} & \textbf{Avg.} \\
\Xhline{1.2pt}


LoRA & $15.05$ h & 70.8 & 85.2 & 79.9 & 91.7 & 84.3 & 84.2 & 71.2 & 79.0 & 80.8 \\

\cellcolor{gray!10}\ourmethod-$\mathcal{S}$ & \cellcolor{gray!10}$15.18$ h & \cellcolor{gray!10}71.4 & \cellcolor{gray!10}86.9 & \cellcolor{gray!10}78.1 & \cellcolor{gray!10}93.5 & \cellcolor{gray!10}81.7 & \cellcolor{gray!10}88.5 & \cellcolor{gray!10}78.1 & \cellcolor{gray!10}83.9 & \cellcolor{gray!10}82.8 \\

\ourmethod-$\mathcal{D}$ & $15.60$ h & 72.6 & 87.5 & 79.2 & 94.3 & 83.0 & 89.5 & 78.9 & 85.0 & 83.8 \\

\cellcolor{gray!10}\ourmethod               & \cellcolor{gray!10}$15.28$ h & \cellcolor{gray!10}72.5 & \cellcolor{gray!10}87.4 & \cellcolor{gray!10}79.0 & \cellcolor{gray!10}94.2 & \cellcolor{gray!10}83.0 & \cellcolor{gray!10}89.3 & \cellcolor{gray!10}78.8 & \cellcolor{gray!10}84.8 & \cellcolor{gray!10}83.6 \\
\Xhline{1.2pt}
\end{tabular}
}
\end{table*}

\subsection{Algorithm Workflow} 
The algorithm framework is presented in Algo.~\ref{alg:aurora}.

\begin{algorithm}[!htpb]
\caption{Algorithm workflow of \ourmethod}\label{alg:aurora}
\Input{Pretrained weight $\mathcal{W}_0$, low-rank factors $\mathbf{A}\in\mathbb{R}^{\tilde r\times d}$ and $\mathbf{B}\in\mathbb{R}^{d\times\tilde r}$, ANL parameters, training data $\{(\mathbf{x}_i,y_i)\}_{i=1}^N$, number of epochs $T$}
\BlankLine
\tcc{\textcolor{blue}{\textbf{Training Phase} (dynamic update: $\mathbf{B}\,\sigma(\mathbf{A}\mathbf{x})$)}}
\For{\rm epoch $t \leftarrow 1$ \KwTo $T$}{
  \For{each minibatch $\{\mathbf{x},y\}$ in training data}{
    \tcc{\textcolor{blue}{Forward pass with ANL on input}}
    $\mathbf{h} \leftarrow \mathcal{W}_0\,\mathbf{x} \;+\;\mathbf{B}\cdot\sigma\bigl(\mathbf{A}\,\mathbf{x}\bigr)$ \Comment*[r]{\textcolor{blue}{Eq.~\ref{eq:train-forward}}}
    Compute loss $\mathcal{L}(\mathbf{h},y)$
    
    \tcc{\textcolor{blue}{Backpropagate through $\mathbf{A},\mathbf{B},\,$and ANL parameters}}
    Backpropagate and update $\{\mathbf{A},\mathbf{B},\text{ANL}\}$
  }
}
\BlankLine
\tcc{\textcolor{blue}{\textbf{Inference Preparation} (static merge: $\mathbf{B}\,\sigma(\mathbf{A})$)}}
\Fn{\texttt{MergeWeights}()}{
  \tcc{\textcolor{blue}{Compute element-wise ANL on matrix $\mathbf{A}$}}
  $\quad \widetilde{\mathbf{A}} \leftarrow \sigma(\mathbf{A})$
  
  \tcc{\textcolor{blue}{Form the effective weight update}}
  $\quad \Delta \mathcal{W} \leftarrow \mathbf{B}\,\widetilde{\mathbf{A}}$ \Comment*[r]{\textcolor{blue}{Eq.~\ref{eq:delta}}}
  \tcc{\textcolor{blue}{Merge into pretrained weights}}
  $\quad \mathcal{W} \leftarrow \mathcal{W}_0 + \Delta \mathcal{W}$
  
  \Return $\mathcal{W}$
}
\BlankLine
\tcc{\textcolor{blue}{\textbf{Inference Phase} (static forward: no extra ANL)}}
\For{each test sample $\mathbf{x}$}{
  $\quad \mathcal{W} \leftarrow \texttt{MergeWeights}()$
  
  $\quad \mathbf{h} \leftarrow \mathcal{W}\,\mathbf{x}$ \Comment*[r]{\textcolor{blue}{Eq.~\ref{eq:inference}}}
  \tcc{\textcolor{blue}{Use $\mathbf{h}$ for downstream prediction}}
}
\end{algorithm}

\section{Intuitive Case}
\label{app-case}
To intuitively and concisely illustrate the impact of nonlinear mapping on the matrices, we first consider a simple scenario where \( \mathbf{A} \in \mathbb{R}^{1 \times 2} \) and \( \mathbf{B} \in \mathbb{R}^{2 \times 1} \):
\begin{equation}
\mathbf{A} = \begin{bmatrix} a_1 & a_2 \end{bmatrix}, \quad \mathbf{B} = \begin{bmatrix} b_1 \\ b_2 \end{bmatrix}
\end{equation}
We then introduce the LeakyReLU activation function as the nonlinear mapping between the two low-rank matrices. Depending on the elements of matrix \( \mathbf{A} \), the resulting weight update comprises the following four matrix structures:
\begin{equation}
\begin{aligned}
\Delta \mathcal{W} &=
\begin{bmatrix} 
b_1 a_1 & b_2 a_1 \\ 
b_1 a_2 & b_2 a_2 
\end{bmatrix}, 
&& \text{if } a_1 > 0 \text{ and } a_2 > 0, \\[10pt]
&\begin{bmatrix} 
b_1 (\alpha a_1) & b_2 (\alpha a_1) \\ 
b_1 a_2 & b_2 a_2 
\end{bmatrix}, 
&& \text{if } a_1 \leq 0 \text{ and } a_2 > 0, \\[10pt]
&\begin{bmatrix} 
b_1 a_1 & b_2 a_1 \\ 
b_1 (\alpha a_2) & b_2 (\alpha a_2) 
\end{bmatrix}, 
&& \text{if } a_1 > 0 \text{ and } a_2 \leq 0, \\[10pt]
&\begin{bmatrix} 
b_1 (\alpha a_1) & b_2 (\alpha a_1) \\ 
b_1 (\alpha a_2) & b_2 (\alpha a_2) 
\end{bmatrix}, 
&& \text{if } a_1 \leq 0 \text{ and } a_2 \leq 0.
\end{aligned}
\end{equation}
where $\alpha$ is a hyperparameter of LeakyReLU, usually a positive number less than 1. Meanwhile, LoRA can only produce:
\begin{equation}
    \Delta \mathcal{W} = \begin{bmatrix} b_1 a_1 & b_2 a_1 \\ b_1 a_2 & b_2 a_2 \end{bmatrix}
\end{equation}
Under the LeakyReLU activation function, each negative component of $\mathbf{A}$ is scaled by the factor $\alpha$, while positive components remain unchanged. This piecewise linear mapping disrupts the uniformity of low-rank multiplication, causing the final weight update $\Delta \mathcal{W}$ to depend not only on the product of $\mathbf{A}$ and $\mathbf{B}$ but also on the local behavior of each element in $\mathbf{A}$. Consequently, even under tight rank constraints, the model benefits from a richer set of possible weight updates, enhancing its adaptability to varying input distributions.

\section{Proof of Proposition \ref{prop:LAE}}
\label{app-proof-LAE}

In this appendix, we provide the complete theoretical analysis and proof of Proposition \ref{prop:LAE}. 
We first restate the problem setup and then present the necessary lemmas, followed by the main proof.

\begin{definition}[Best Linear Rank-$r$ Error]
For $M\in\mathbb{R}^{d_{\mathrm{out}}\times d_{\mathrm{in}}}$, define
\[
  \varepsilon_r(M)
  ~:=~
  \inf_{
    U\in \mathbb{R}^{d_{\mathrm{out}}\times r},\,
    V\in \mathbb{R}^{r\times d_{\mathrm{in}}}
  }
  \|\,M - U\,V\|.
\]
If $\mathrm{rank}(M)>r$, then $\varepsilon_r(M)>0$ \cite{topicsma}. 
\end{definition}

\begin{assumption}[Bounded Input Domain]
\label{assump:bounded-domain}
We assume $\mathbf{x}\in \mathcal{X}\subset \mathbb{R}^{d_{\mathrm{in}}}$ satisfies $\|\mathbf{x}\|\le X_{\max}$. 
Then for $A\in \mathbb{R}^{r\times d_{\mathrm{in}}}$ with $\|A\|\le A_{\max}$, 
the vector $\mathbf{z}=A\mathbf{x}$ remains in a bounded set $\Omega \subset \mathbb{R}^r$ (compact).
\end{assumption}

\begin{definition}[Nonlinear Low-Rank Update]
\label{def:nonlinear-update}
Let
\[
  M_{\mathrm{nonlinear}}(\mathbf{x})
  ~=~
  B\,\sigma\bigl(A\,\mathbf{x}\bigr),
\]
with $A\in \mathbb{R}^{r\times d_{\mathrm{in}}}, B\in\mathbb{R}^{d_{\mathrm{out}}\times r}$. 
The map $\sigma:\mathbb{R}^r\to\mathbb{R}^r$ is given by
\[
  \sigma(\mathbf{z})
  ~=~
  \mathcal{F}(\mathbf{z}) 
  ~+~
  \mathbf{w}_s\,\cdot\,\mathbf{s}(\mathbf{z}),
\]
where $\mathcal{F}$ is a fixed bounded function (e.g.\ $\tanh$-based) and $\mathbf{s}(\mathbf{z})$ denotes B-spline basis functions in $\mathbb{R}^r$.
\end{definition}

\begin{lemma}[Piecewise Polynomial Approximation]
\label{lem:b-spline-approx}
Consider $f:\Omega\to\mathbb{R}^m$ with $f\in C^k(\Omega)$ on a bounded domain $\Omega\subset \mathbb{R}^r$. 
Let $\Delta>0$ be the subdivision size in each coordinate axis for constructing a tensor-product B-spline. 
Then there exists a B-spline $g(\mathbf{z})$ such that
\[
  \sup_{\mathbf{z}\in\Omega}
  \|f(\mathbf{z}) - g(\mathbf{z})\|
  ~\le~
  C_f(\Delta)^k,
\]
where $C_f>0$ is a constant depending on $f$'s $k$-th order partial derivatives and the geometry of $\Omega$ \cite{spline1}.
\end{lemma}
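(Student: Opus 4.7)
The plan is to reduce the vector-valued statement to scalar components and then invoke the classical theory of tensor-product B-spline quasi-interpolation. Because $\|f(\mathbf{z})-g(\mathbf{z})\|$ on $\mathbb{R}^m$ is equivalent up to a factor $\sqrt{m}$ to $\max_{1\le j\le m}|f_j(\mathbf{z})-g_j(\mathbf{z})|$, it suffices to bound each scalar component $f_j\in C^k(\Omega)$ and absorb the factor $\sqrt{m}$ into the final constant $C_f$. From here on I treat the scalar case.

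First, I would fix a uniform tensor-product knot grid of mesh size $\Delta$ on a slightly enlarged axis-aligned box $\Omega^*\supset\Omega$, and extend $f$ to a compactly supported function $\tilde f\in C^k(\Omega^*)$ using a Whitney/Stein-type extension (assuming $\Omega$ has a Lipschitz boundary so the extension is bounded in $C^k$). The extension operator norm is absorbed into $C_f$ and contributes the ``geometry of $\Omega$'' dependence stated in the lemma. The tensor-product B-spline space $\mathcal{S}_{k,\Delta}$ of order $k$ (degree $k-1$ per coordinate) on this grid admits a quasi-interpolation operator $Q_\Delta:C^0(\Omega^*)\to\mathcal{S}_{k,\Delta}$ of de Boor--Fix (or Schoenberg variation-diminishing) type, with two crucial properties: (i) \emph{polynomial reproduction}, $Q_\Delta p=p$ for every polynomial $p$ of total degree $<k$; and (ii) \emph{locality and $L^\infty$-stability}, $(Q_\Delta \tilde f)(\mathbf{z})$ depends only on values of $\tilde f$ in an $O(\Delta)$-neighborhood $B(\mathbf{z},c_r\Delta)$, with $\|Q_\Delta\|_{L^\infty\to L^\infty}\le K_{k,r}$ for a constant depending only on $k$ and $r$.

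Set $g:=Q_\Delta \tilde f$. Fix $\mathbf{z}\in\Omega$ and let $T_{k-1}^{\mathbf{z}}\tilde f$ be the degree-$(k-1)$ Taylor polynomial of $\tilde f$ centered at $\mathbf{z}$. By polynomial reproduction and linearity, $\tilde f - Q_\Delta\tilde f = (\tilde f - T_{k-1}^{\mathbf{z}}\tilde f) - Q_\Delta(\tilde f - T_{k-1}^{\mathbf{z}}\tilde f)$, so locality and stability give $|\tilde f(\mathbf{z})-g(\mathbf{z})|\le(1+K_{k,r})\sup_{\mathbf{y}\in B(\mathbf{z},c_r\Delta)}|\tilde f(\mathbf{y})-T_{k-1}^{\mathbf{z}}\tilde f(\mathbf{y})|$. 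The standard Taylor remainder bound yields $|\tilde f(\mathbf{y})-T_{k-1}^{\mathbf{z}}\tilde f(\mathbf{y})|\le\frac{r^{k}}{k!}\|\mathbf{y}-\mathbf{z}\|^{k}\max_{|\alpha|=k}\|D^{\alpha}\tilde f\|_{L^\infty(\Omega^*)}$, and for $\mathbf{y}$ in the $O(\Delta)$-neighborhood this is $O(\Delta^{k})$. Combining constants and using that the extension preserves $\max_{|\alpha|=k}\|D^{\alpha}\tilde f\|_{L^\infty(\Omega^*)}\le C_{\mathrm{ext}}(\Omega)\,\max_{|\alpha|=k}\|D^{\alpha}f\|_{L^\infty(\Omega)}$, one obtains the claimed uniform bound $\sup_{\mathbf{z}\in\Omega}|f(\mathbf{z})-g(\mathbf{z})|\le C_f\Delta^{k}$, with $C_f$ explicitly proportional to $\max_{|\alpha|=k}\|D^{\alpha}f\|_{L^\infty(\Omega)}$ times purely combinatorial/geometric factors $(r,k,\Omega)$.

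The step I expect to be the main obstacle is the boundary treatment: the clean tensor-product construction and quasi-interpolation theory are stated most naturally on a box, so for a general bounded $\Omega$ one needs to justify the $C^k$-extension. If the paper is willing to assume $\Omega$ is a product of intervals (which is the natural setting when ``subdivision size in each coordinate axis'' is invoked), this technicality disappears entirely; otherwise one must cite Stein's extension theorem and allow $C_f$ to depend on the Lipschitz constants of $\partial\Omega$. Everything else -- polynomial reproduction, locality of B-spline quasi-interpolants, and Taylor's remainder -- is standard and purely computational.
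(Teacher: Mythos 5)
The paper does not actually prove this lemma: it is invoked as a black-box citation to the spline-approximation literature (\cite{spline1}), and the only proof the appendix supplies is for the downstream Lemma~\ref{lem:comb-F-S}, which consumes this one. Your proposal therefore goes strictly beyond what the paper does, and the argument you give is the standard and correct one: reduce to scalar components, extend $f$ in $C^k$ to a box containing $\Omega$, build a local, $L^\infty$-stable tensor-product quasi-interpolant that reproduces polynomials of total degree $<k$, and then combine the identity $\tilde f-Q_\Delta\tilde f=(I-Q_\Delta)(\tilde f-T^{\mathbf z}_{k-1}\tilde f)$ with the Taylor remainder to get the $O(\Delta^k)$ rate. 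You also correctly isolate the only genuine technicality (the $C^k$-extension for a non-box domain, which is where the ``geometry of $\Omega$'' dependence of $C_f$ enters), which the paper's one-line citation glosses over entirely.

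Two minor points. First, your parenthetical alternative of a ``Schoenberg variation-diminishing'' operator does not work for general $k$: that operator reproduces only affine functions, so it saturates at $O(\Delta^2)$; the de Boor--Fix (or dual-functional) quasi-interpolant is the one that reproduces degree-$(k-1)$ polynomials and delivers the order-$k$ rate, so you should drop the parenthetical or restrict it to $k\le 2$. Second, the lemma as stated needs the $k$-th partial derivatives of $f$ to be bounded on $\Omega$ (e.g.\ $f\in C^k(\overline{\Omega})$ with $\Omega$ bounded and Lipschitz); your proof implicitly uses this when you write $\max_{|\alpha|=k}\|D^\alpha f\|_{L^\infty(\Omega)}<\infty$, and it is worth stating explicitly since the paper's phrasing ``$f\in C^k(\Omega)$ on a bounded domain'' does not by itself guarantee it. Neither issue affects the correctness of your overall argument.
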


\begin{lemma}[Combining Fixed and Learnable Nonlinearities]
\label{lem:comb-F-S}
Let $\Omega\subset \mathbb{R}^r$ be compact. 
Assume $\mathcal{F}:\Omega\to \mathbb{R}^r$ is fixed, bounded, and $C^1$, and let $h(\mathbf{z})\in C^k(\Omega)$ be the target. 
Define
\[
  \sigma(\mathbf{z})
  ~=~
  \mathcal{F}(\mathbf{z})
  ~+~
  \mathbf{w}_s\cdot \mathbf{s}(\mathbf{z}),
\]
where $\mathbf{s}(\mathbf{z})$ is a B-spline basis. 
Then, for any $\epsilon>0$, one can choose $\Delta>0$ and $\mathbf{w}_s$ such that
\[
  \sup_{\mathbf{z}\in\Omega}
  \bigl\|
    h(\mathbf{z})
    -
    \bigl[\mathcal{F}(\mathbf{z}) + \mathbf{w}_s\cdot \mathbf{s}(\mathbf{z})\bigr]
  \bigr\|
  ~\le~\epsilon.
\]
Furthermore, the error decays like $O\bigl((\Delta)^k\bigr)$ as $\Delta\to 0$.
\end{lemma}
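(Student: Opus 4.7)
The plan is to reduce the statement to an approximation problem for the residual $\tilde{h}(\mathbf{z}) := h(\mathbf{z}) - \mathcal{F}(\mathbf{z})$. Since $\mathcal{F}$ is fixed and enters additively, any uniform bound of the form $\sup_{\mathbf{z}\in\Omega}\|\tilde h(\mathbf{z}) - g(\mathbf{z})\| \le \epsilon$ for a B-spline function $g$ transfers verbatim to a bound on $\|h(\mathbf{z}) - [\mathcal{F}(\mathbf{z}) + g(\mathbf{z})]\|$. It therefore suffices to produce a knot spacing $\Delta$ and a coefficient vector $\mathbf{w}_s$ such that $g(\mathbf{z}) = \mathbf{w}_s\cdot \mathbf{s}(\mathbf{z})$ approximates $\tilde h$ to the required accuracy on the compact domain $\Omega$.

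The central step is to apply Lemma \ref{lem:b-spline-approx} to $\tilde h$. For this, I first verify the regularity of the residual: $h\in C^k(\Omega)$ by hypothesis, and in the concrete instantiations used in the paper $\mathcal{F}$ is composed of $\tanh$ activations and linear maps, hence $C^\infty$; therefore $\tilde h \in C^k(\Omega)$. Lemma \ref{lem:b-spline-approx} then yields a tensor-product B-spline on $\Omega$ with
\[
  \sup_{\mathbf{z}\in\Omega}\bigl\|\tilde h(\mathbf{z}) - g(\mathbf{z})\bigr\| \;\le\; C_{\tilde h}\,\Delta^{k} .
\]
Expanding $g$ in the chosen basis as $g(\mathbf{z}) = \mathbf{w}_s \cdot \mathbf{s}(\mathbf{z})$ identifies the required $\mathbf{w}_s$ and delivers the asymptotic $O(\Delta^k)$ decay. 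Choosing $\Delta$ small enough that $C_{\tilde h}\Delta^{k} \le \epsilon$ closes the $\epsilon$-argument.

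The main obstacle is a regularity gap between hypothesis and claimed rate. Taken literally, the assumption $\mathcal{F}\in C^1$ only implies $\tilde h\in C^1$, in which case Lemma \ref{lem:b-spline-approx} delivers the weaker rate $O(\Delta)$ rather than $O(\Delta^{k})$; to obtain the claimed $O(\Delta^{k})$ the hypothesis should be strengthened to $\mathcal{F}\in C^{k}(\Omega)$, which is automatic for the $\tanh$-based choice adopted throughout the paper. A secondary subtlety is that the paper defines $\mathbf{s}(\mathbf{z}) = \sum_{i} B(z_i)$ coordinate-separably, whereas Lemma \ref{lem:b-spline-approx} is stated for genuine tensor-product splines: to avoid an expressivity gap the proof should either invoke the full tensor-product basis (tractable since the hidden dimension $\widetilde r$ is small) or restrict the target class to additively separable residuals. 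Once either of these readings is fixed, both the $\epsilon$-bound and the asymptotic rate are immediate consequences of Lemma \ref{lem:b-spline-approx}, and no further machinery beyond bounded-domain tensor-product spline approximation is required.
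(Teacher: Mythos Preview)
Your approach is essentially identical to the paper's: define the residual $r(\mathbf{z})=h(\mathbf{z})-\mathcal{F}(\mathbf{z})$, apply Lemma~\ref{lem:b-spline-approx} to it, and read off both the $\epsilon$-bound and the $O(\Delta^k)$ rate. The paper's proof is exactly this three-line reduction.

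Your regularity caveat is well taken and in fact sharper than the paper. The paper asserts that ``since $h\in C^k$ and $\mathcal{F}$ is $C^1$, $r$ remains $C^k$,'' which does not follow from the stated hypotheses; as you note, $\mathcal{F}\in C^1$ only forces $r\in C^1$. Your proposed fix---strengthen to $\mathcal{F}\in C^k$, automatic for the $\tanh$-based choice actually used---is the right repair. Your remark on separable versus tensor-product splines is also a legitimate concern that the paper does not address, though it is orthogonal to the structure of the argument.
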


\begin{proof}[Proof of Lemma~\ref{lem:comb-F-S}]
Let $r(\mathbf{z})=h(\mathbf{z})-\mathcal{F}(\mathbf{z})$. 
Since $h\in C^k(\Omega)$ and $\mathcal{F}$ is fixed and $C^1$, $r(\mathbf{z})$ remains $C^k$. 
Applying Lemma~\ref{lem:b-spline-approx} to $r(\mathbf{z})$ yields a B-spline $g(\mathbf{z})$ with $\|r(\mathbf{z})-g(\mathbf{z})\|\le C\,(\Delta)^k$. 
Hence $\|h(\mathbf{z}) - [\mathcal{F}(\mathbf{z}) + g(\mathbf{z})]\|\le C\,(\Delta)^k$, completing the proof.
\end{proof}

We restate Proposition \ref{prop:LAE} here for completeness:

\textbf{Proposition \ref{prop:LAE}}
(Lower Approximation Error)
\emph{Let $M\in\mathbb{R}^{d_{\mathrm{out}}\times d_{\mathrm{in}}}$ with $\mathrm{rank}(M)>r$. Then 
\[
  \varepsilon_r(M)
  ~:=~
  \inf_{U,V}\|M - U\,V\|
  \;>\;0.
\]
Under Definition~\ref{def:nonlinear-update}, there exist $A^*, B^*$ and a B-spline parameter set $(\mathbf{w}_s^*)$ such that
\[
  \|\,M - M_{\mathrm{nonlinear}}\|
  ~\le~
  c\,\varepsilon_r(M),
  \quad
  0<c<1.
\]}

\begin{proof}[Proof]
Let $M^*=U^*V^*$ be the best linear rank-$r$ approximation of $M$, so $\|M - M^*\|=\varepsilon_r(M)$. 
Denote the residual $R=M - M^*$, and we have $\|R\|=\varepsilon_r(M)$. 

By Assumption~\ref{assump:bounded-domain}, for $\|\mathbf{x}\|\le X_{\max}$, let $\mathbf{z}=A^*\mathbf{x}\in\Omega\subset\mathbb{R}^r$, with $\|A^*\|\le A_{\max}$. 
Thus $\mathbf{z}$ lies in a compact $\Omega$. 
Consider $R(\mathbf{x})$ as a function $h(\mathbf{z})=R(\mathbf{x})$. 
Since $R$ is linear (hence $C^\infty$), $h(\mathbf{z})$ is at least $C^1$ in $\mathbf{z}$. 

From Lemma~\ref{lem:comb-F-S}, there is a B-spline $\mathbf{w}_s^*\cdot \mathbf{s}(\mathbf{z})$ approximating $h(\mathbf{z})-\mathcal{F}(\mathbf{z})$ within $\gamma\,\|R\|$ for some $0<\gamma<1$. 
Define
\[
  \widehat{M}(\mathbf{x})
  :=
  M^*(\mathbf{x})
  ~+~
  B^*\Bigl[\mathcal{F}(A^*\mathbf{x}) 
           + \mathbf{w}_s^*\cdot \mathbf{s}(A^*\mathbf{x})\Bigr].
\]
Then
\[
  \|\widehat{M}-M\|
  ~=~
  \|\bigl(M^* + B^*[\dots]\bigr) - (M^* + R)\|
  ~=~
  \bigl\|\,B^*\bigl[\mathcal{F}(\cdot) + \mathbf{w}_s^*\cdot \mathbf{s}(\cdot)\bigr] - R\bigr\|
  ~\le~
  \gamma\,\|R\|
  =
  \gamma\,\varepsilon_r(M).
\]
Since $\gamma<1$, we obtain $\|\widehat{M}-M\|<\varepsilon_r(M)$, which strictly improves upon the LoRA limit. 
Setting $\widehat{M}\equiv M_{\mathrm{nonlinear}}$ completes the proof.
\end{proof}

\section{Proof of Proposition \ref{prop:gradient-boundedness}}
\label{proof: prop-3}
\textbf{Proposition \ref{prop:gradient-boundedness}.}
\emph{In the \ourmethod, the use of the $\tanh$ activation function and B-spline basis functions results in bounded gradients with respect to both the inputs and the model parameters.}

The loss function $L$ for a single data point $(x, y)$ is defined as $L(x, y) = \frac{1}{2} \left\| f_{\text{\ourmethod}}(x) - y \right\|^2$, where $y \in \mathbb{R}^{d_{\text{out}}}$ is the target output. We will compute and bound the gradients of the loss function with respect to $W_b$, $w_s$ and the input $x$.

\begin{lemma}
The gradients of the loss function with respect to $W_b$ is bounded.
\end{lemma}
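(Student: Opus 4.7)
The plan is to write $\nabla_{W_b}L$ in closed form via the chain rule and then bound it as a product of two factors, each of which is controlled by the structural properties already used earlier in the paper. Writing $f_{\mathrm{AuroRA}}(x)=\mathcal{W}_0 x+W_b\,\sigma(W_a x)$, I would first obtain
\[
  \nabla_{W_b}L \;=\; \bigl(f_{\mathrm{AuroRA}}(x)-y\bigr)\,\sigma(W_a x)^{\top},
\]
so that by submultiplicativity $\|\nabla_{W_b}L\|_F \le \|f_{\mathrm{AuroRA}}(x)-y\|\cdot\|\sigma(W_a x)\|$. The task then reduces to bounding the activation output and the residual separately.

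For the activation output, I would invoke Assumption~\ref{assump:bounded-domain} to conclude that $z=W_a x$ lies in a compact set $\Omega\subset\mathbb{R}^{\widetilde r}$. On $\Omega$ the fixed component $\mathcal{F}(z)=\tanh\!\bigl(H\tanh(z)\bigr)$ is bounded entrywise in $[-1,1]$ since the outer $\tanh$ is globally bounded by $1$, giving $\|\mathcal{F}(z)\|\le\sqrt{\widetilde r}$. For the learnable component $\mathcal{L}(z)=w_s\!\cdot\! s(z)$, each B-spline basis function has compact support and takes values in $[0,1]$ (and in fact sums to $1$ as a partition of unity), so $\|s(z)\|\le C_s$ uniformly on $\Omega$; hence $\|\mathcal{L}(z)\|\le \|w_s\|\,C_s$. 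Combining yields a constant $M_\sigma$ with $\|\sigma(W_a x)\|\le M_\sigma$ depending only on $\widetilde r$ and $\|w_s\|$.

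For the residual, I would again use the bounded-input assumption together with the standard hypotheses that the pretrained weights $\mathcal{W}_0$, the trainable factors $W_a,W_b$, and the target $y$ all have bounded norms (the first by construction, the second typically enforced by weight decay or a compact parameter ball, the third by the data distribution). Then
\[
  \|f_{\mathrm{AuroRA}}(x)-y\| \;\le\; \|\mathcal{W}_0\|\,X_{\max}+\|W_b\|\,M_\sigma+Y_{\max} \;=:\; R_{\max},
\]
and assembling the two bounds gives $\|\nabla_{W_b}L\|_F \le R_{\max}\,M_\sigma$, which is the desired conclusion.

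The main obstacle I anticipate is not the bound on $\sigma$, which follows cleanly from compactness of $\Omega$ and the intrinsic boundedness of $\tanh$ and the B-spline basis, but rather the rigorous justification that the residual stays uniformly bounded. This requires making explicit the hypotheses that are only implicit in the surrounding exposition, namely boundedness of the targets and of the trainable parameters throughout training. I would state these as auxiliary assumptions at the start of the proof so that the factorwise bounds compose into an unconditional statement, rather than appearing to assume what we want to prove.
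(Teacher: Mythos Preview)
Your argument is internally coherent, but you have misidentified the parameter $W_b$. In the paper's notation for Appendix~\ref{proof: prop-3}, $W_b$ is \emph{not} the upward projector $\mathbf{B}$; it is the self-projection matrix $\mathbf{H}\in\mathbb{R}^{\widetilde r\times\widetilde r}$ sitting \emph{inside} the fixed nonlinearity $\mathcal{F}(z)=\phi(W_b\,\phi(z))$ with $\phi=\tanh$. The paper's chain rule therefore reads
\[
  \frac{\partial L}{\partial W_b}
  \;=\;
  \bigl(f_{\text{AuroRA}}(x)-y\bigr)^{\top} B\cdot
  \frac{\partial\,\mathrm{ANL}(A^{\top}x)}{\partial W_b},
  \qquad
  \frac{\partial\,\mathrm{ANL}(z)}{\partial W_b}
  \;=\;
  \operatorname{diag}\!\bigl(\phi'(W_b\phi(z))\bigr)\,\phi(z)^{\top},
\]
and boundedness follows because $\phi(z)\in(-1,1)$ and $\phi'(u)=1-\tanh^2(u)\in(0,1]$. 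Your formula $\nabla_{W_b}L=(f_{\mathrm{AuroRA}}(x)-y)\,\sigma(W_ax)^{\top}$ is the gradient with respect to $\mathbf{B}$, which is a different lemma altogether (and one the paper does not even state).

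For the intended parameter, your compactness-of-$\Omega$ and B-spline arguments are irrelevant: the bound comes purely from the algebraic fact that $\tanh$ and its derivative are globally bounded, so the inner factor $\operatorname{diag}(\phi')\cdot\phi(z)^{\top}$ has entries in $(-1,1)$ regardless of the input domain. You would still need the residual factor and $\|B\|$ to be bounded, and here your instinct to flag those as implicit assumptions is apt, since the paper simply asserts the product is ``a product of bounded terms'' without isolating those hypotheses.
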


\begin{proof}
Compute the gradient $\frac{\partial L}{\partial W_b}$:
\[
\frac{\partial L}{\partial W_b} = \left( f_{\text{\ourmethod}}(x) - y \right)^\top B \cdot \frac{\partial \text{ANL}(A^\top x)}{\partial W_b}.
\]
Compute $\frac{\partial \text{ANL}(z)}{\partial W_b}$:
\[
\frac{\partial \text{ANL}(z)}{\partial W_b} = \frac{\partial \phi(W_b \phi(z))}{\partial W_b} = \operatorname{diag}\left( \phi'\left( W_b \phi(z) \right) \right) \cdot \phi(z)^\top,
\]
where $\phi'(u) = 1 - \tanh^2(u)$ is the derivative of Tanh, $\operatorname{diag}(v)$ denotes a diagonal matrix with vector $v$ on the diagonal. It is not difficult to deduce that $\phi(z) \in (-1, 1)$ since Tanh outputs are bounded, and $\phi'(u) \in (0, 1]$ because $1 - \tanh^2(u) \leq 1$. So $\frac{\partial \text{ANL}(z)}{\partial W_b}$ is bounded. Consequently, $\frac{\partial L}{\partial W_b}$ is bounded as it is a product of bounded terms.
\end{proof}

\begin{lemma}
    The gradients of the loss function with respect to $w_s$ is bounded.
\end{lemma}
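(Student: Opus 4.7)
The plan is to mirror the structure of the previous lemma by applying the chain rule and then bounding each resulting factor separately. First, I would write
\[
\frac{\partial L}{\partial w_s}
= \bigl( f_{\text{\ourmethod}}(x) - y \bigr)^\top B \cdot \frac{\partial \text{ANL}(A^\top x)}{\partial w_s}.
\]
Since $w_s$ enters the ANL only through the additive learnable branch $\mathbf{w}_s \cdot \mathbf{s}(z)$ and does so linearly, the final factor reduces cleanly to the B-spline basis vector $\mathbf{s}(A^\top x)$ (or a diagonal arrangement thereof, depending on whether one treats $w_s$ as a vector or matrix of spline coefficients). This isolates the nonlinear part of the computation into a single, well-understood object.

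Next, I would bound $\|\mathbf{s}(A^\top x)\|$ using two standard facts about B-splines: nonnegativity and the partition-of-unity property, which together give $0 \le B_i(u) \le 1$ for every basis function, with only finitely many nonzero at any point $u$. Combined with Assumption~\ref{assump:bounded-domain}, which guarantees $z = A^\top x$ lies in a compact set $\Omega$, this yields a uniform constant $S_{\max}$ with $\|\mathbf{s}(A^\top x)\| \le S_{\max}$. The remaining factor $(f_{\text{\ourmethod}}(x) - y)^\top B$ is bounded by exactly the same reasoning used in the previous lemma: $B$ is a fixed (finite) parameter matrix, $y$ lies in a bounded label space, and $f_{\text{\ourmethod}}(x) = \mathcal{W}_0 x + B \cdot \text{ANL}(A^\top x)$ is bounded because $\mathcal{W}_0 x$ is continuous on a compact input domain and $\text{ANL}$ is bounded on $\Omega$ (its $\tanh$ component is trivially bounded, and its spline component is bounded by $\|\mathbf{w}_s\|\,S_{\max}$). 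Multiplying the three uniform bounds finishes the argument.

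The main obstacle is not computational but conceptual: one must cleanly justify that the spline basis output is uniformly bounded regardless of how $A$ evolves during training. This is handled by the compactness of $\Omega$ guaranteed by Assumption~\ref{assump:bounded-domain} together with the locality/partition-of-unity of B-splines, which I would cite as a standard fact rather than reprove. A minor subtlety is that the boundedness claim is pointwise in the parameters $(B, w_s)$ at any given training iterate; this is consistent with how the analogous statement for $W_b$ was framed in the previous lemma, so no extra machinery is required.
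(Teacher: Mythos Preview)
Your proposal is correct and follows essentially the same approach as the paper: apply the chain rule to get $(f_{\text{\ourmethod}}(x)-y)^\top B \cdot \partial \text{ANL}(z)/\partial w_s$, observe that the ANL is linear in $w_s$ so the inner derivative is just $\mathbf{s}(z)$, and then use boundedness of B-spline basis functions to conclude. If anything, you supply more justification than the paper does---the paper simply asserts that B-spline outputs are bounded because they are smooth with compact support, without invoking partition-of-unity or the compactness of $\Omega$, and does not explicitly rebound the prefactor $(f_{\text{\ourmethod}}(x)-y)^\top B$.
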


\begin{proof}
    Compute the gradient $\frac{\partial L}{\partial w_s}$:
\[
\frac{\partial L}{\partial w_s} = \left( f_{\text{\ourmethod}}(x) - y \right)^\top B \cdot \frac{\partial \text{ANL}(A^\top x)}{\partial w_s}.
\]
Compute $\frac{\partial \text{ANL}(z)}{\partial w_s}$:
\[
\frac{\partial \text{ANL}(z)}{\partial w_s} = s(z),
\]
since $\text{ANL}(z)$ is linear in $w_s$. B-spline basis functions $B(z_i)$ are smooth and have compact support, and the outputs of $B(z_i)$ are bounded. Therefore, $s(z)$ is bounded, and thus $\frac{\partial L}{\partial w_s}$ is bounded.
\end{proof}

\begin{lemma}
    The gradients of the loss function with respect to the input $x$ is bounded.
\end{lemma}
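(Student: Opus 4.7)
The plan is to compute $\partial L/\partial x$ via the chain rule and bound each factor using (i) the compactness of the input domain from Assumption~\ref{assump:bounded-domain}, (ii) the uniform bounds on $\tanh$ and its derivative, and (iii) the piecewise-polynomial, compact-support structure of the B-spline basis.

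First I would write
\[
\frac{\partial L}{\partial x}
= \bigl(f_{\text{\ourmethod}}(x)-y\bigr)^{\!\top}\,\frac{\partial f_{\text{\ourmethod}}(x)}{\partial x},
\qquad
\frac{\partial f_{\text{\ourmethod}}(x)}{\partial x}
= \mathcal{W}_0 + B\,J_\sigma(z)\,A^{\top},
\]
where $z = A^{\top}x$ and $J_\sigma(z)=\partial\sigma(z)/\partial z$. The residual $f_{\text{\ourmethod}}(x)-y$ has finite norm because $\|x\|\le X_{\max}$ and $y$ lies in a bounded target set, while $\|B\,\sigma(A^\top x)\|$ is bounded since $\mathcal{F}(z)=\tanh(H\tanh(z))\in(-1,1)^{\widetilde r}$ and the spline part satisfies $\|w_s\cdot\mathbf{s}(z)\|\le\|w_s\|\,\|\mathbf{s}\|_\infty$, with $\|\mathbf{s}\|_\infty$ finite by Lemma~\ref{lem:b-spline-approx}.

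The main work is bounding $J_\sigma(z)$. Splitting $\sigma=\mathcal{F}+\mathcal{L}$, the chain rule applied to $\mathcal{F}$ gives
\[
J_{\mathcal{F}}(z)
=\mathrm{diag}\bigl(\tanh'(H\tanh(z))\bigr)\,H\,\mathrm{diag}\bigl(\tanh'(z)\bigr),
\]
and since $\tanh'(u)\in(0,1]$ the two diagonal factors have spectral norm at most $1$, yielding $\|J_{\mathcal{F}}(z)\|\le\|H\|$. For $\mathcal{L}$, the Jacobian is diagonal with entries $w_{s,i}\,B'(z_i)$; because each B-spline basis function is a piecewise polynomial with compact support, its derivative is uniformly bounded by a constant $C_B$ determined by spline order and knot spacing, so $\|J_{\mathcal{L}}(z)\|\le C_B\,\|w_s\|_\infty$. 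Combining,
\[
\Bigl\|\tfrac{\partial f_{\text{\ourmethod}}}{\partial x}\Bigr\|
\le \|\mathcal{W}_0\|+\|B\|\,\bigl(\|H\|+C_B\,\|w_s\|_\infty\bigr)\,\|A\|,
\]
and the conclusion $\|\partial L/\partial x\|<\infty$ follows by Cauchy--Schwarz with the bound on the residual.

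The step I expect to be the main obstacle is producing a \emph{training-stable} uniform bound $C_B$ on $B'$: if the spline knots are fixed (as in the default implementation), $C_B$ is literally a constant depending only on spline order and knot spacing $\Delta$; if knots are adaptive, one additionally needs a lower bound on $\Delta$ (e.g.\ from initialization or an explicit regularizer) to prevent $B'$ from blowing up. Once this is handled, the remaining argument is a routine product of operator-norm bounds mirroring the two preceding lemmas.
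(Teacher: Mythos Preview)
Your proposal is correct and mirrors the paper's own proof: both compute $\partial L/\partial x$ by the chain rule, factor out $(f_{\text{\ourmethod}}(x)-y)^\top$, split the Jacobian of $\sigma$ into the fixed $\tanh$-based part (bounded via $\tanh'\in(0,1]$ and the finite norm of $H$) and the spline part (bounded via the piecewise-polynomial, compact-support derivative of the B-spline basis), and then multiply through by $\|A\|$ and $\|B\|$. Your version is in fact slightly more explicit than the paper's---you give operator-norm inequalities and flag the knot-spacing dependence of $C_B$---but the structure and key ideas are the same.
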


\begin{proof}
    Compute the gradient $\frac{\partial L}{\partial x}$:
\[
\frac{\partial L}{\partial x} = \left( f_{\text{\ourmethod}}(x) - y \right)^\top \left( W + B \cdot \frac{\partial \text{ANL}(A^\top x)}{\partial x} \right).
\]
Compute $\frac{\partial \text{ANL}(A^\top x)}{\partial x}$:
\[
\frac{\partial \text{ANL}(A^\top x)}{\partial x} = \frac{\partial \text{ANL}(z)}{\partial z} \cdot A^\top,
\]
where $z = A^\top x$. Compute $\frac{\partial \text{ANL}(z)}{\partial z}$:
\[
\frac{\partial \text{ANL}(z)}{\partial z} = \frac{\partial \phi(W_b \phi(z))}{\partial z} + w_s \cdot \frac{\partial s(z)}{\partial z}.
\]
Compute $\frac{\partial \phi(W_b \phi(z))}{\partial z}$:
\[
\frac{\partial \phi(W_b \phi(z))}{\partial z} = \operatorname{diag}\left( \phi'\left( W_b \phi(z) \right) \right) W_b \operatorname{diag}\left( \phi'(z) \right).
\]
$\phi'(z)$ and $\phi'\left( W_b \phi(z) \right)$ are bounded in $(0, 1]$. Entries of $W_b$ are finite. So $\frac{\partial \phi(W_b \phi(z))}{\partial z}$ is bounded.

Compute $\frac{\partial s(z)}{\partial z}$:
\[
\frac{\partial s(z)}{\partial z} = \left[ B'(z_1), B'(z_2), \dots, B'(z_r) \right]^\top.
\]
Derivatives $B'(z_i)$ of B-spline functions are bounded due to their polynomial nature and compact support. So $\frac{\partial s(z)}{\partial z}$ is bounded. Therefore, $\frac{\partial \text{ANL}(z)}{\partial z}$ is bounded, leading to $\frac{\partial \text{ANL}(A^\top x)}{\partial x}$ being bounded. Consequently, $\frac{\partial L}{\partial x}$ is bounded.
\end{proof}

\section{Dataset}
\subsection{GLUE Benchmark}
\label{appendix-dataset-glue}
The GLUE (General Language Understanding Evaluation), as introduced in \cite{glue}, is a widely adopted benchmark in the field of Natural Language Processing (NLP). GLUE encompasses a collection of eight diverse NLP tasks: MNLI (natural language inference), SST-2 (sentiment analysis), MRPC (paraphrase detection), CoLA (linguistic acceptability), QNLI (natural language inference), QQP (question answering), RTE (recognizing textual entailment), and STS-B (textual similarity). The statistical details of these datasets are summarized in Table \ref{tab:dataset-glue}.

\begin{table*}[h!]
\centering
\caption{Detailed task descriptions and dataset statistics for the GLUE benchmark. STS-B is categorized as a regression task, while all other tasks involve single-sentence or sentence-pair classification.}
\label{tab:dataset-glue}
\resizebox{0.9\textwidth}{!}{%
\begin{tabular}{@{}lllcrrrl@{}}
\toprule
\multicolumn{1}{l|}{\textbf{Corpus}} & Task & Metrics & \# Labels & \# Train & \# Val & \# Test  & Domain \\ \midrule
\multicolumn{8}{c}{Single-Sentence Tasks} \\ \midrule
\multicolumn{1}{l|}{CoLA} & Acceptability & Matthews Corr. & 2 & 8.55k & 1.04k & 1.06k & misc. \\
\multicolumn{1}{l|}{SST-2} & Sentiment & Accuracy & 2  & 67.3k & 872 & 1.82k & Movie reviews \\ \midrule
\multicolumn{8}{c}{Similarity and Paraphrase Tasks} \\ \midrule
\multicolumn{1}{l|}{MRPC} & Paraphrase & Accuracy/F1 & 2 & 3.67 & 408 & 1.73k & News \\
\multicolumn{1}{l|}{STS-B} & Sentence similarity & Pearson/Spearman Corr. & 1 & 5.75k & 1.5k & 1.38k & misc. \\
\multicolumn{1}{l|}{QQP} & Paraphrase & Accuracy/F1 & 2  & 364k & 40.4k & 391k & Social QA \\ \midrule
\multicolumn{8}{c}{Inference Tasks} \\ \midrule
\multicolumn{1}{l|}{MNLI} & NLI & Accuracy & 3  & 393k & 19.65k & 19.65k & misc. \\
\multicolumn{1}{l|}{QNLI} & QA/NLI & Accuracy & 2  & 105k & 5.46k & 5.46k & Wikipedia \\
\multicolumn{1}{l|}{RTE} & NLI & Accuracy & 2  & 2.49k & 277 & 3k & News \& Wikipedia \\ \bottomrule
\end{tabular}%
}
\end{table*}

\subsection{Commonsense Reasoning}
\label{appendix-dataset-cr}
Following \cite{llm_adapters}, we use eight datasets in Commonsense Reasoning task. (1) The BoolQ \cite{boolq} dataset is a question-answering benchmark consisting of 15,942 examples, where the questions are naturally occurring and generated in unprompted and unconstrained settings, requiring yes/no answers. (2) The PIQA 
\cite{piqa} dataset presents questions with two potential solutions, demanding physical commonsense reasoning to identify the correct answer. (3) The SIQA \cite{siqa} dataset focuses on reasoning about human actions and their social implications. (4) The HellaSwag \cite{hellas} dataset is designed for commonsense natural language inference (NLI) tasks, where each question includes a context and several potential endings, from which the correct continuation must be selected. (5) The WinoGrande \cite{winog} dataset is a fill-in-the-blank task with binary options, where the goal is to select the most plausible option for a given sentence requiring commonsense reasoning. (6) The ARC-c and (7) ARC-e \cite{arc} datasets refer to the Challenge and Easy sets, respectively, of the ARC dataset, which consists of multiple-choice science questions designed at a grade-school level, with the former being more challenging than the latter. (8) The OBQA \cite{obqa} dataset focuses on questions that necessitate multi-step reasoning, integration of external common knowledge, and in-depth text comprehension. Statistical details are shown in Table \ref{tab:dataset-cr}.

\begin{table}[h!]
\centering
\caption{Details of datasets being evaluated in commonsense reasoning task.}
\label{tab:dataset-cr}
\begin{tabular}{l|llr}
\toprule
Dataset & \#Train & \#Test & Answer \\
\midrule
BoolQ \cite{boolq} & 9.4K & 3270 & Yes/No \\
PIQA \cite{piqa} & 16.1K & 1830 & Option \\
SIQA \cite{siqa} & 33.4K & 1954 & Option\\
HellaSwag \cite{hellas} & 39.9K & 10042 & Option\\
WinoGrande \cite{winog} & 63.2K & 1267 & Option\\
ARC-e \cite{arc} & 1.1K & 2376 & Option\\
ARC-c \cite{arc} & 2.3K & 1172 & Option\\
OBQA \cite{obqa} &  5.0K & 500 & Option\\
\bottomrule
\end{tabular}
\end{table}

\subsection{Image Classification}
\label{appendix-dataset-icl}
We show the details of the datasets in Image Classification task in Table \ref{tab:data-icl}.

\begin{table}[h!]
\centering
\caption{Details of the datasets for the Image Classification task.}
\label{tab:data-icl}
\resizebox{0.7\textwidth}{!}{%
\begin{tabular}{@{}l|clrrc@{}}
\toprule
Dataset & \#Class & \#Train & \#Val & \#Test  & Rescaled resolution \\ \midrule
OxfordPets \cite{pets} & 37  & 3,312 & 368 & 3,669 & \multirow{8}{*}{$224\times224$} \\
StandfordCars \cite{cars} & 196  & 7,329 & 815 & 8,041 &  \\
CIFAR10 \cite{cifar} & 10 & 45,000 & 5,000 & 10,000 &  \\
DTD \cite{dtd} & 47 & 4,060 & 452 & 1,128 &  \\
EuroSAT \cite{eurosat} & 10 & 16,200 & 5,400 & 5,400 &  \\
FGVC \cite{fgvc} & 100 & 3,000 & 334 & 3,333 &  \\
RESISC45 \cite{resisc} & 45 & 18,900 & 6,300 & 6,300 &  \\
CIFAR100 \cite{cifar} & 100 & 45,000 & 5,000 & 10,000 &  \\ \bottomrule
\end{tabular}%
}
\end{table}

\section{Hyperparameters}
To ensure the reproducibility of our experimental results, we provide the detailed hyperparameter settings used in our experiments. In all of our experiments, to achieve a better balance between parameter count and performance, we set the hidden layer dimension (Rank $\widetilde{r}$) of \ourmethod to 2. Correspondingly, we set the hyperparameter $\alpha$ of \ourmethod to 4. Natural Language Understanding and image classification tasks run on four NVIDIA GeForce RTX 4090 (24GB) GPUs.
Commonsense reasoning and subject-driven generation tasks run on NVIDIA L20 (48GB).
\subsection{Natural Language Understanding}
\label{appendix-hyper-nlu}
We provide the hyperparameters used for the GLUE benchmark in natural language understanding experiments in Table \ref{tab:hyper-nlu}. To facilitate reproducibility, we fix the random seed to 0. We tune the learning rate, while all other settings follow those used in LoRA \cite{lora} and FourierFT \cite{fourierft}.
\begin{table}[h!]
\centering
\caption{Hyperparameter setup of \ourmethod for the GLUE benchmark.}
\label{tab:hyper-nlu}
\resizebox{0.8\textwidth}{!}{%
\begin{tabular}{@{}clcccccc@{}}
\toprule
Model & Hyperparameter & \multicolumn{1}{|c}{STS-B} & \multicolumn{1}{c}{RTE} & \multicolumn{1}{c}{MRPC} & \multicolumn{1}{c}{CoLA} & \multicolumn{1}{c}{SST-2} & \multicolumn{1}{c}{QNLI} \\ \midrule
\multirow{5}{*}{Both} & \multicolumn{1}{l|}{Optimizer} & \multicolumn{6}{c}{AdamW} \\
 & \multicolumn{1}{l|}{LR Schedule} & \multicolumn{6}{c}{Linear} \\
 & \multicolumn{1}{l|}{Warmup Ratio} & \multicolumn{6}{c}{0.06} \\
 & \multicolumn{1}{l|}{Rank $\widetilde{r}$} & \multicolumn{6}{c}{2} \\
 & \multicolumn{1}{l|}{$\alpha$} & \multicolumn{6}{c}{4} \\
 \midrule
\multirow{4}{*}{Base} & \multicolumn{1}{l|}{Epochs} & 30 & 80 & 30 & 90 & 30 & 80 \\
 & \multicolumn{1}{l|}{Learning Rate} & 6E-4 & 5E-4 & 8E-4 & 5E-3 & 8E-4 & 5E-3 \\
 & \multicolumn{1}{l|}{Max Seq. Len} & 512 & 512 & 512 & 512 & 512 & 512 \\
 & \multicolumn{1}{l|}{Batch Size} & 64 & 16 & 64 & 32 & 32 & 32 \\ \midrule
\multicolumn{1}{l}{\multirow{4}{*}{Large}} & \multicolumn{1}{l|}{Epochs} & 20 & 10 & 30 & 50 & 40 & 20 \\
\multicolumn{1}{l}{} & \multicolumn{1}{l|}{Learning Rate} & 3E-4 & 4E-4 & 1E-3 & 5E-4 & 8E-4 & 4E-4 \\
\multicolumn{1}{l}{} & \multicolumn{1}{l|}{Max Seq. Len} & 512 & 512 & 512 & 256 & 128 & 512 \\
\multicolumn{1}{l}{} & \multicolumn{1}{l|}{Batch Size} & 16 & 32 & 16 & 16 & 16 & 16 \\ \bottomrule
\end{tabular}%
}
\end{table}

\subsection{Commonsense Reasoning}
\label{appendix-hyper-cr}
We provide the detailed hyperparameters for fine-tuning LLaMA3-8B in the commonsense reasoning task in Table \ref{tab:hyper-cr}.
\begin{table}[h!]
\centering
\caption{Hyperparameter setup of \ourmethod for Commonsense Reasoning.}
\label{tab:hyper-cr}
\begin{tabular}{l|c}
\toprule
Hyperparameter & Commonsense Reasoning \\
\midrule
Rank $\widetilde{r}$ & 2 \\
$\alpha$ & 4 \\
Dropout & 0.05 \\
Batch Size & 16 \\
Optimizer & Adam W \\
Learning Rate & 3e-4 \\
Warmup Steps & 100 \\
Epochs & 3 \\
Target module & q,k,v,up,down \\
\bottomrule
\end{tabular}
\end{table}

\subsection{Image Classification}
\label{appendix-hyper-icl}
We provide the detailed hyperparameters for the image classification in Table \ref{tab:hyper-icl}. We tune the learning rate, while the weight decay value follows the settings used in FourierFT \cite{fourierft} without tuning.
\begin{table}[h!]
\centering
\caption{Hyperparameter setup of \ourmethod for the image classification.}
\label{tab:hyper-icl}
\resizebox{0.8\textwidth}{!}{%
\begin{tabular}{@{}clcccccccc@{}}
\toprule
Model & Hyperparameter & \multicolumn{1}{|c}{OxfordPets} & \multicolumn{1}{c}{StanfordCars} & \multicolumn{1}{c}{CIFAR10} & \multicolumn{1}{c}{DTD} & \multicolumn{1}{c}{EuroSAT} & \multicolumn{1}{c}{FGVC} & \multicolumn{1}{c}{RESISC45} & \multicolumn{1}{c}{CIFAR100} \\ \midrule
\multirow{5}{*}{Both} & \multicolumn{1}{l|}{Optimizer} & \multicolumn{8}{c}{AdamW} \\
 & \multicolumn{1}{l|}{LR Schedule} & \multicolumn{8}{c}{Linear} \\
 & \multicolumn{1}{l|}{Epochs} & \multicolumn{8}{c}{10} \\
 & \multicolumn{1}{l|}{Rank $\widetilde{r}$} & \multicolumn{8}{c}{2} \\
 & \multicolumn{1}{l|}{$\alpha$} & \multicolumn{8}{c}{4} \\
 \midrule
\multirow{3}{*}{Base} & \multicolumn{1}{l|}{Learning Rate (\ourmethod)} & 5e-3 & 1e-2 & 1e-2 & 1e-2 & 5e-3 & 1e-2 & 8e-3 & 8e-3 \\
 & \multicolumn{1}{l|}{Learning Rate (Head)} & 5E-3 & 1e-2 & 3e-2 & 8E-3 & 8E-3 & 1e-2 & 1e-2 & 5e-3\\
 & \multicolumn{1}{l|}{Weight Decay} & 8E-4 & 4E-5 & 9E-5 & 7E-5 & 3E-4 & 7E-5 & 3E-4 & 1E-4 \\ \midrule
\multicolumn{1}{l}{\multirow{3}{*}{Large}} & \multicolumn{1}{l|}{Learning Rate (\ourmethod)} & 5e-3 & 9e-3 & 8e-3 & 8e-3 & 4e-3 & 1.5e-2 & 7.5e-3 & 1.5e-2\\
\multicolumn{1}{l}{} & \multicolumn{1}{l|}{Learning Rate (Head)} & 4e-3 & 8e-3 & 4e-2 & 9e-3 & 8e-3 & 1e-2 & 1.5e-2 & 5e-3\\
 & \multicolumn{1}{l|}{Weight Decay} & 8E-4 & 4E-5 & 9E-5 & 7E-5 & 3E-4 & 7E-5 & 3E-4 & 1E-4 \\ \midrule
\end{tabular}%
}
\end{table}

\section{More Cases of Generated Images}
\label{sec:app-sd}
In Figure \ref{fig:app-sd}, we present more results of subject-driven generation using both LoRA and \ourmethod.

\begin{figure*}[!h]
  \centering
  \includegraphics[width=1\linewidth]{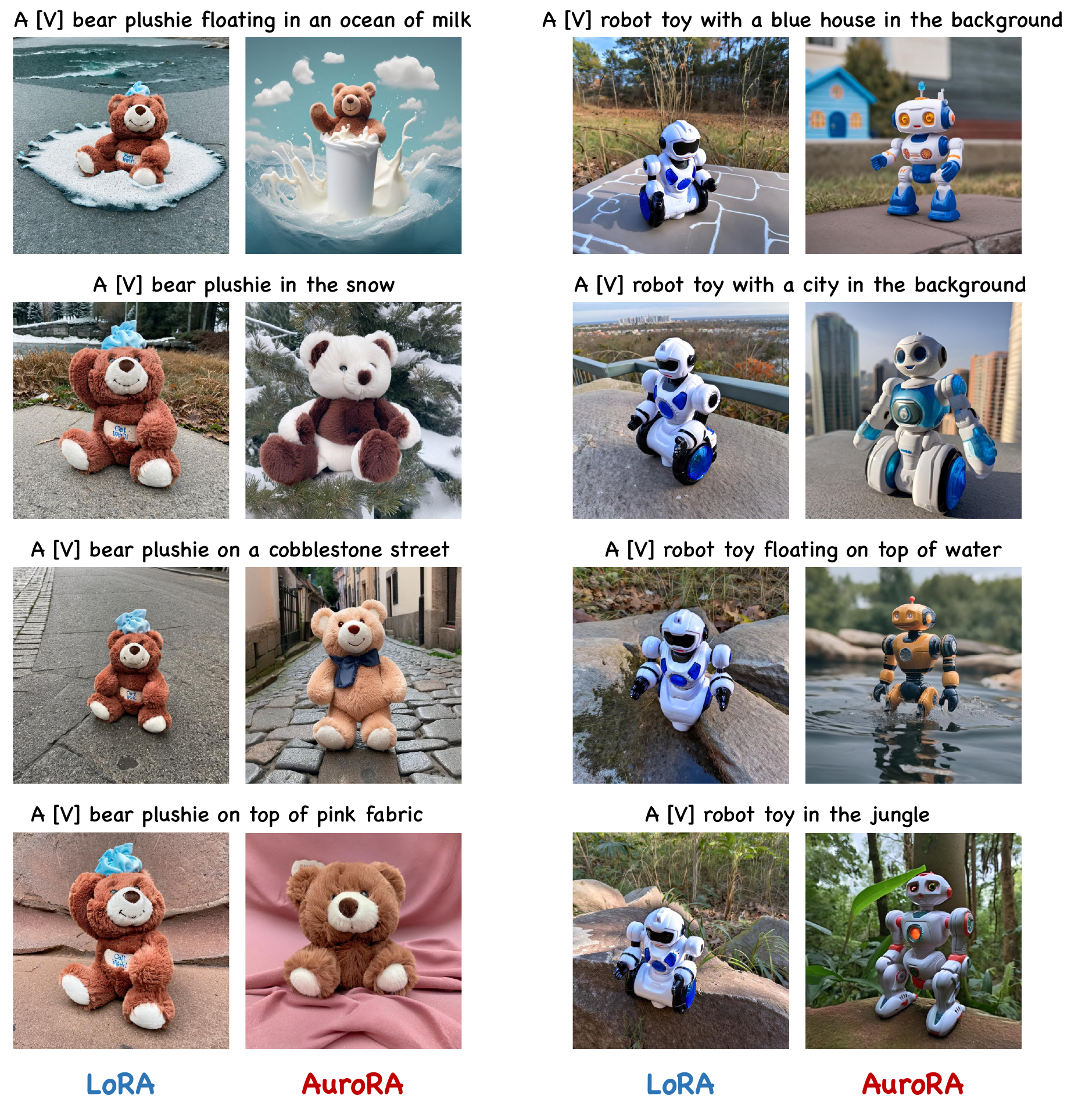}
  \caption{More generated images in the subject-driven generation task.}
   \label{fig:app-sd}
\end{figure*}

\end{document}